\documentclass[10pt]{article} % For LaTeX2e
% \usepackage{tmlr}
% If accepted, instead use the following line for the camera-ready submission:
\usepackage[accepted]{tmlr}
% To de-anonymize and remove mentions to TMLR (for example for posting to preprint servers), instead use the following:
% \usepackage[preprint]{tmlr}

% Optional math commands from https://github.com/goodfeli/dlbook_notation.
% \input{math_commands.tex}

\usepackage{hyperref}
\usepackage{url}

\usepackage{amsmath}
\usepackage{amssymb}
\usepackage{mathtools}
\usepackage{amsthm}
\usepackage[capitalize,noabbrev]{cleveref}

% Todonotes is useful during development; simply uncomment the next line
%    and comment out the line below the next line to turn off comments
%\usepackage[disable,textsize=tiny]{todonotes}
\usepackage[textsize=tiny]{todonotes}

\usepackage{booktabs}       % professional-quality tables
\usepackage{amsfonts}       % blackboard math symbols
\usepackage{bm}
\usepackage{nicefrac}       % compact symbols for 1/2, etc.
\usepackage{microtype}      % microtypography
\usepackage{xcolor}         % colors
\usepackage{caption}
\usepackage{subcaption}
\usepackage[normalem]{ulem}
\usepackage{wrapfig}
\usepackage{multirow}
\usepackage{amartya_ltx_small}
\usepackage[inline]{enumitem}
\usepackage{wrapfig}

%%%%%%%%%%%%%%%%%%%%%%%%%%%%%%%%
% THEOREMS
%%%%%%%%%%%%%%%%%%%%%%%%%%%%%%%%
\theoremstyle{plain}
\newtheorem{theorem}{Theorem}[section]
\newtheorem{proposition}[theorem]{Proposition}

\theoremstyle{definition}

\theoremstyle{remark}

\DeclareMathOperator*{\argmax}{argmax}
\def\R{\mathbb{R}}

\definecolor{yellow}{HTML}{FAA21A}

%%%%%%%%%%%%%%%%%%%%%%%%%%%%%%%%%
% Tikz packages
%%%%%%%%%%%%%%%%%%%%%%%%%%%%%%%%%
\usepackage{tikz}
\usetikzlibrary{arrows}
\usetikzlibrary{shapes.misc}
\usetikzlibrary{positioning}
\tikzset{cross/.style={cross out, draw, 
         minimum size=4,%*(#1-\pgflinewidth), 
         inner sep=0pt, outer sep=0pt}}
     \definecolor{cadmiumgreen}{rgb}{0.0, 0.42, 0.24}

\graphicspath{{./figures/},{./figures/imagenet_high_inf/}}
\usepackage{pgfplots}
\pgfplotsset{compat=1.11}
\usepgfplotslibrary{fillbetween}
\usetikzlibrary{intersections}
\usetikzlibrary{patterns}
\usetikzlibrary{calc}
%\usepackage[active,tightpage,pdftex]{preview}
%\PreviewEnvironment{tikzpicture}

\tikzset{
    right angle quadrant/.code={
        \pgfmathsetmacro\quadranta{{1,1,-1,-1}[#1-1]}     % Arrays for selecting quadrant
        \pgfmathsetmacro\quadrantb{{1,-1,-1,1}[#1-1]}},
    right angle quadrant=1, % Make sure it is set, even if not called explicitly
    right angle length/.code={\def\rightanglelength{#1}},   % Length of symbol
    right angle length=2ex, % Make sure it is set...
    right angle symbol/.style n args={3}{
        insert path={
            let \p0 = ($(#1)!(#3)!(#2)$) in     % Intersection
                let \p1 = ($(\p0)!\quadranta*\rightanglelength!(#3)$), % Point on base line
                \p2 = ($(\p0)!\quadrantb*\rightanglelength!(#2)$) in % Point on perpendicular line
                let \p3 = ($(\p1)+(\p2)-(\p0)$) in  % Corner point of symbol
            (\p1) -- (\p3) -- (\p2)
        }
    }
}
\pgfdeclarelayer{bg}
\pgfsetlayers{bg,main}
%%%%%%%%%%%%%%%%%%%%%%%%%%%%%%%%%
% Tikz packages
%%%%%%%%%%%%%%%%%%%%%%%%%%%%%%%%%

\newcommand{\Db}{\widetilde{\mathcal{D}}_\beta}
\newcommand{\Dpb}{\widetilde{\mathcal{D}}_{\pi(\beta)}}
\newcommand{\Dob}{\widetilde{\mathcal{D}}^\perp_\beta}
\newcommand{\D}{\mathcal{D}}
\newcommand{\V}{\mathcal{V}}
\newcommand{\x}{\bm x}
\newcommand{\xt}{\widetilde{\bm x}}

\newcommand{\eg}{\textit{e.g., }}
\newcommand{\ie}{\textit{i.e., }}
\newcommand{\reals}{\mathbb{R}}

\newcommand{\err}[2]{\mathrm{Err}\br{#1;#2}}
\newcommand{\adv}[3]{\mathrm{Adv}_{#3,\infty}\br{#1;#2}}

\newcommand{\yhat}{\widehat{y}}

% \title{Catastrophic overfitting is a bug but it is caused by features}
\title{Catastrophic overfitting can be induced with discriminative non-robust features}

% Authors must not appear in the submitted version. They should be hidden
% as long as the tmlr package is used without the [accepted] or [preprint] options.
% Non-anonymous submissions will be rejected without review.

% \author{\name Kyunghyun Cho \email kyunghyun.cho@nyu.edu \\
%       \addr Department of Computer Science\\
%       University of New York
%       \AND
%       \name Raia Hadsell \email raia@google.com \\
%       \addr DeepMind
%       \AND
%       \name Hugo Larochelle \email hugolarochelle@google.com\\
%       \addr Mila, Universit\'e de Montr\'eal \\
%       Google Research\\
%       CIFAR Fellow}

\author{%
  Guillermo~Ortiz-Jimenez\thanks{The first two authors contributed equally. Guillermo Ortiz-Jimenez did this work while visiting the University of Oxford.}\email guillermo.ortizjimenez@epfl.ch \\
  \addr Ecole Polytechnique Fédérale de Lausanne
  % examples of more authors
   \AND
  Pau de Jorge\footnotemark[1] \email pau@robots.ox.ac.uk\\
  \addr University of Oxford \\ 
  Naver Labs Europe
   \AND
  Amartya Sanyal \email amartya.sanyal@tuebingen.mpg.de\\
  \addr ETH Z\"urich \\
  Max Planck Institute for Intelligent Systems, Tuebingen
  \AND
  Adel Bibi \email  adel.bibi@eng.ox.ac.uk\\
  \addr University of Oxford
  \AND
  Puneet K. Dokania \email  puneet@robots.ox.ac.uk \\
  \addr University of Oxford \\
  Five AI Ltd.
   \AND
  Pascal Frossard \email  pascal.frossard@epfl.ch\\
  \addr Ecole Polytechnique Fédérale de Lausanne
   \AND
  Grégory Rogez \email  gregory.rogez@naverlabs.com\\
  \addr Naver Labs Europe
   \AND
  Philip H.S. Torr \email  philip.torr@eng.ox.ac.uk\\
  \addr University of Oxford
}

% The \author macro works with any number of authors. Use \AND 
% to separate the names and addresses of multiple authors.

  % Insert correct month for camera-ready version
 % Insert correct year for camera-ready version
 % Insert correct link to OpenReview for camera-ready version

\begin{document}

\maketitle
\begin{abstract}
Adversarial training (AT) is the \emph{de facto} method for building robust neural networks, but it can be computationally expensive. To mitigate this, fast single-step attacks can be used, but this may lead to catastrophic overfitting (CO). This phenomenon appears when networks gain non-trivial robustness during the first stages of AT, but then reach a breaking point where they become vulnerable in just a few iterations. The mechanisms that lead to this failure mode are still poorly understood. In this work, we study the onset of CO in single-step AT methods through controlled modifications of typical datasets of natural images. In particular, we show that CO can be induced at much smaller $\epsilon$ values than it was observed before just by injecting images with seemingly innocuous features. These features aid non-robust classification but are not enough to achieve robustness on their own. Through extensive experiments we analyze this novel phenomenon and discover that the presence of these easy features induces a learning shortcut that leads to CO. Our findings provide new insights into the mechanisms of CO and improve our understanding of the dynamics of AT.\looseness=-1

\end{abstract}

\section{Introduction}\label{sec:introduction}

Deep neural networks are sensitive to imperceptible worst-case perturbations, also known as adversarial perturbations~\citep{szegedy2013intriguing}. As a result, training neural networks that are robust to such perturbations has been an active area of study in recent years (see \citet{proc_ieee} for a review). The most prominent line of research, referred to as adversarial training (AT), focuses on online data augmentation with adversarial samples during training. However, finding these samples for deep neural networks is an NP-hard problem~\citep{weng2018towards}. In practice, this is usually overcome with various methods, referred to as \emph{adversarial attacks}, which find approximate solutions to this hard problem. The most popular attacks are based on projected gradient descent (PGD)~\citep{PGD}, which is still a computationally expensive algorithm that requires multiple steps of gradient \textit{ascent}. This hinders its use in many large-scale applications motivating the use of alternative efficient single-step attacks~\citep{fgsm}.\looseness=-1

The use of single-step attacks within AT, however, raises concerns regarding stability. Although single-step AT leads to an initial increase in robustness, the networks often reach a breaking point beyond which they lose all gained robustness in just a few additional training iterations~\citep{RS-FGSM}. This phenomenon is known as catastrophic overfitting (CO)~\citep{RS-FGSM, grad_align}. Given the clear computational advantage of single-step attacks during AT, a significant body of work has been dedicated to finding ways to circumvent CO via regularization and data augmentation.\looseness=-1

However, despite recent methodological advances, the mechanisms that lead to CO remain poorly understood. Nevertheless, due to the complexity of this problem, we argue that identifying such mechanisms cannot be done through observations alone and requires \emph{active interventions}~\cite{ilyas2019adversarial}. 
That is, we want to
synthetically induce CO in settings where it would not naturally happen in order to explain its causes. \looseness=-1

The key contribution of this work is exploring the root causes of CO, leading to the core message:

\emph{Catastrophic overfitting is a learning shortcut used by the network to avoid learning complex robust features while achieving high accuracy using easy non-robust ones.}

In more detail, our main contributions are:
\begin{enumerate}[label=(\roman*)]
\item We demonstrate that CO can be induced by injecting features that are strong for standard classification but insufficient for robust classification. 
\item Through extensive empirical analysis, we find that CO is connected to the network's preference for certain features in a dataset. 
\item We describe and analyse a mechanistic explanation of CO and its different prevention methods.\looseness=-1
\end{enumerate}

Our findings improve our understanding of CO by focusing on how data influences AT. They provide insights into the dynamics of AT, where the interaction between robust and non-robust features plays a key role, and open the door to promising future work that manipulates the data to avoid CO.\looseness=-1

\paragraph{Outline} The rest of the paper is structured as follows: In \cref{sec:rel-work}, we review related work on CO. \Cref{sec:inducing_CO} presents our main observation: CO can be induced by manipulating the data distribution by injecting simple features. In \cref{sec:both_features}, we perform an in-depth analysis of this phenomenon by studying the features used by networks trained with different procedures. In \cref{sec:geometry}, we link these observations to the evolution of the curvature of the networks and find that CO is a learning shortcut. \Cref{sec:summary} pieces all these evidence together and provides a mechanistic explanation of CO. Finally, in \cref{sec:discussion}, we use our new perspective to provide new insights on different ways to prevent CO.\looseness=-1

\clearpage

\section{Preliminaries and related work}\label{sec:rel-work}
Let $f_{\bm\theta}:\R^{d}\to\mathcal{Y}$ denote a neural network parameterized by a set of weights $\bm\theta\in\R^{n}$ which maps input samples $\bm x\in\R^{d}$ to $y\in\mathcal{Y}=\{1, \dots, c\}$. The objective of adversarial training (AT) is to find the network parameters $\bm\theta\in\reals^n$ that minimize the loss for the worst-case perturbations, that is:
\begin{equation}
\min_{\bm\theta}\mathbb{E}_{(\bm x, y)\sim\mathcal{D}}\left[\max_{\|\bm\delta\|_p\leq\epsilon}\mathcal{L}(f_{\bm\theta}(\bm x + \bm \delta), y)\right],\label{eq:adv_robust}
\end{equation}
where $\mathcal{D}$ is a data distribution, $\bm\delta\in\R^d$ represents an adversarial perturbation, and $(p,\epsilon)$ characterize the adversary. This is typically solved by alternately minimizing the outer objective and maximizing the inner one via first-order optimization procedures. The outer minimization is tackled via a standard optimizer, \eg SGD, while the inner maximization problem is approximated with adversarial attacks with one or more steps of \textit{gradient ascent}. Single-step AT methods are built on top of FGSM~\citep{fgsm}. In particular, FGSM solves the linearized version of the inner maximization objective (when $p=\infty$), \ie
\begin{equation}
\label{eq:fgsm}
\bm\delta_{\text{FGSM}} =\argmax_{\|\bm \delta\|_\infty\leq\epsilon}\mathcal{L}(f_{\bm\theta}(\bm x), y)+\bm \delta^\top\nabla_{\bm x}\mathcal{L}(f_{\bm\theta}(\bm x), y)\nonumber=\epsilon\operatorname{sign}\left(\nabla_{\bm x}\mathcal{L}(f_{\bm\theta}(\bm x), y)\right).
\end{equation}
FGSM is computationally efficient as it requires only a single forward-backward step. However, FGSM adversarial training (FGSM-AT) often yields networks that are vulnerable to multi-step attacks such as PGD~\citep{PGD}. In particular, \citet{RS-FGSM} observed that FGSM-AT presents a failure mode where the robustness of the model increases during the initial training epochs, but then loses all robustness within a few iterations. This is known as catastrophic overfitting (CO). They further observed that augmenting the FGSM attack with random noise seemed to mitigate CO. However, \citet{grad_align} showed that this method still leads to CO at larger~$\epsilon$. Therefore, they proposed combining FGSM-AT with a smoothness regularizer (GradAlign) that encourages the cross-entropy loss to be locally linear. Although GradAlign successfully avoids CO, the smoothness regularizer adds a significant computational overhead.\looseness=-1

Even more expensive, multi-step attacks approximate the inner maximization in \cref{eq:adv_robust} with several gradient ascent steps \citep{adv_ml_scale, PGD, TRADES}. If a sufficient number of steps are used, these methods do not suffer from CO and achieve better robustness. Due to their superior performance and extensively validated robustness~\citep{PGD, tramer2018ensemble, TRADES, rice2020overfitting}, multi-step methods such as PGD are considered the reference in AT. However, using multi-step attacks in AT linearly increases the cost of training with the number of steps. 
Other methods have been proposed to avoid CO while reducing the cost of AT. However, these methods either only move CO to larger $\epsilon$ radii~\citep{zero_grad}, are more computationally expensive~\citep{free,towards}, or achieve sub-optimal robustness~\citep{kang2021understanding,AAAI}. Recently,~\citet{nfgsm} proposed N-FGSM, which avoids CO without extra cost, however, the phenomena of CO remains poorly understood.\looseness=-1

Other works have also analyzed the training dynamics when CO occurs. Initially, it was suggested that CO was a result of the networks overfitting to attacks limited to the corners of the $\ell_\infty$ ball~\citep{RS-FGSM}. However, this conjecture was later dismissed by~\citet{grad_align} who showed PGD-AT projected to the corners of the $\ell_\infty$ ball does not suffer from CO. Alternatively, they suggested that the reason~\citet{RS-FGSM} avoids CO is the reduced radii of their perturbations on expectation, but \citet{nfgsm} showed that they could still prevent CO while using larger perturbations. Meanwhile, it has been consistently reported~\citep{grad_align, AAAI} that networks suffering from CO exhibit a highly non-linear loss landscape. As FGSM relies on the local linearity of the loss landscape, this sudden increase in non-linearity of the loss renders FGSM ineffective~\citep{AAAI, kang2021understanding}. However, none of these works have managed to identify what causes the network to become strongly non-linear. In this work, we address this knowledge gap, and explore a plausible mechanism that can cause single-step AT to develop CO.\looseness=-1

\clearpage

\section{Inducing catastrophic overfitting}\label{sec:inducing_CO}

Our starting point is the observation that robust solutions are not the default outcome of standard training and are avoided unless explicitly enforced through AT. That is, robust solutions are \textit{hard to learn}. We also know that robust classification requires leveraging alternative robust features that are not learned in standard training~\citep{ilyas2019adversarial,sanyal2021how}. Finally, we know that when CO occurs, the robust accuracy drops while the clean and FGSM accuracy increases~\citep{RS-FGSM,grad_align}. With that in mind, we pose the question: \textit{Can CO be a mechanism to avoid learning complex robust features?} If so, the network may be using CO as a way to favor easy non-robust features over complex robust ones.\looseness=-1

Directly testing this hypothesis is difficult as it requires identifying and characterizing robust and non-robust features in a real dataset. This is a challenging task, equivalent to actually solving the problem of adversarial robustness. To overcome this, we follow the standard practice in the field and use controlled modifications of the data~\citep{arpit2017closer, ilyas2019adversarial, shah2020pitfalls, nads}. By conducting experiments on the manipulated data, we can make claims about CO and its causes. Specifically, we find that by injecting a simple discriminative feature on standard vision datasets, we can induce CO at lower values of $\epsilon$ (\ie with smaller perturbations) than those at which it naturally occurs. This suggests that the structure of the data plays a significant role in the onset of CO.\looseness=-1

\begin{figure*}[t]
    \centering
    \includegraphics[width=\textwidth]{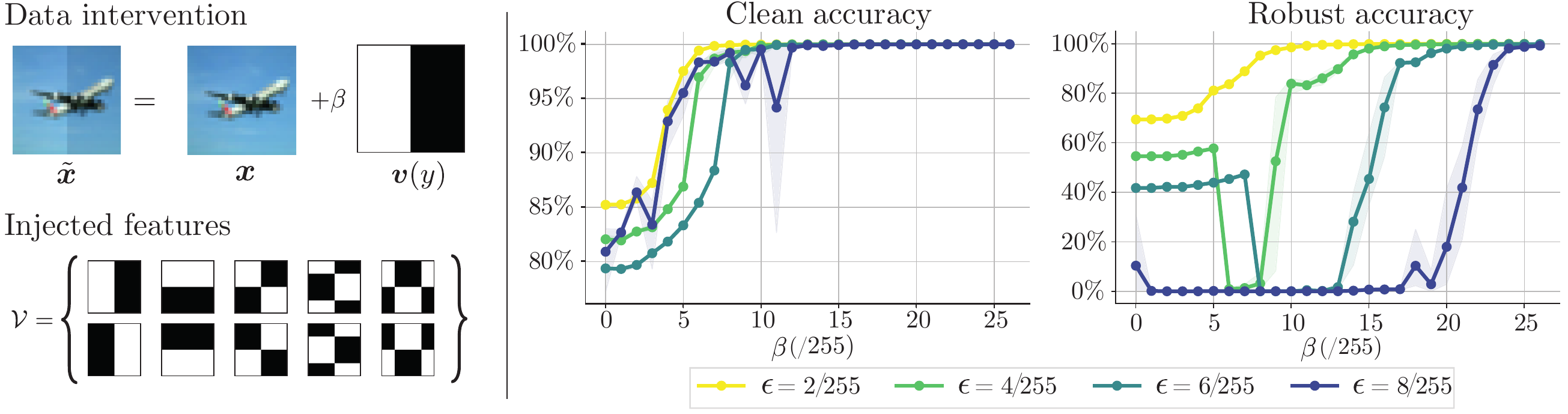}
    \caption{\textbf{Left:} Modified dataset with injected simple and discriminative features. \textbf{Right:}~Clean and robust performance after FGSM-AT on injected datasets $\widetilde{\mathcal{D}}_\beta$. We vary the strength of the synthetic features $\beta$ ($\beta=0$ corresponds to the original CIFAR-10) and the robustness budget $\epsilon$. For $\epsilon\in\{\nicefrac{4}{255}, \nicefrac{6}{255}\}$, our intervention can induce CO when the synthetic features have strength $\beta\approx\epsilon$, even when the original data does not suffer CO. Results averaged over 3 seeds. Shades show min-max values.\looseness=-1}
    \label{fig:beta-epsilon}
\end{figure*}

\paragraph{Our injected dataset} Let $(\x,y) \sim \D$ be an image-label pair sampled from a distribution \(\D\). We modify the data by adding a label-dependent feature $\bm v(y)$, scaled by a parameter $\beta>0$, and construct a family of \emph{injected datasets} $\Db$:
\begin{equation}
   (\xt, y)\sim\Db:\quad \xt=\bm x + \beta\, \bm v(y) \;\;\text{with}\;\; (\bm x,y)\sim\D.\label{eq:injection}
\end{equation}
We design $\bm v(y)$ to be linearly separable with $\|\bm v(y)\|_p=1$ for all $y\in\mathcal{Y}$. We mainly use \(p=\infty\) in the paper as CO has been observed more often in this setting \citep{RS-FGSM, grad_align}, but we also present some results with \(p=2\) in \Cref{ap:beta-epsilons}. The set of all injected features is denoted as $\mathcal{V}=\{\bm v(y)\;|\; y\in\mathcal{Y}\}$. The scale parameter $\beta>0$ is fixed for all classes and controls the relative strength of the original and injected features, \ie $\x$ and $\bm v(y)$, respectively (see \cref{fig:beta-epsilon} (left)).\looseness=-1

Injecting features that are linearly separable and perfectly correlated with the labels induces some interesting properties. In short, although $\bm v(y)$ is easy-to-learn, the discriminative power and robustness of a classifier that solely relies on the injected features $\mathcal{V}$ depends on the scale parameter $\beta$. Indeed, a linear classifier relying only on $\mathcal{V}$ can separate $\Db$ for a large enough $\beta$. However, if $\x$ has some components in $\operatorname{span}(\mathcal{V})$, the interaction between $\x$ and $\bm v(y)$ may decrease the robustness of the classifier for small $\beta$. We rigorously illustrate this behavior in \cref{ap:math}. \looseness=-1

To control the interaction between $\x$ and $\bm v(y)$, we design $\mathcal{V}$ by selecting the first vectors from the low-frequency components of the 2D Discrete Cosine Transform (DCT)~\citep{dct}, depicted in \Cref{fig:beta-epsilon} (left), which have a large alignment with the space of natural images in our experiments (\eg CIFAR-10). To ensure the norm constraint, we binarize these vectors to have only $\pm 1$ values, resulting in a maximal per-pixel perturbation that satisfies $\|\bm v(y)\|_\infty=1$. These design constraints also make it easy to visually identify the alignment of adversarial perturbations $\bm\delta$ with $\bm v(y)$ as they have distinctive patterns.\looseness=-1

\paragraph{Injection strength ($\beta$) drives CO} We train a PreActResNet18~\citep{preactresnet} on different intervened versions of CIFAR-10~\citep{cifar} using FGSM-AT for different robustness budgets $\epsilon$ and scales $\beta$. \Cref{fig:beta-epsilon}~(right) shows the results of these experiments in terms of clean accuracy and robustness\footnote{Robustness measured using PGD with 50 steps and 10 restarts.\looseness=-1}. In terms of \textit{clean accuracy}, \cref{fig:beta-epsilon}~(right) shows two regimes. First, when $\beta<\epsilon$, the network achieves roughly the same accuracy by training and testing on $\Db$ as by training and testing on $\D$ (corresponding to $\beta=0$), \ie the network ignores the added features  \(\bm v(y)\). Meanwhile, when $\beta>\epsilon$, the clean accuracy reaches $100\%$ indicating that the network relies on the injected features and ignores $\mathcal{D}$. This is further shown empirically in \Cref{sec:both_features} and \Cref{ap:data_slices}.\looseness=-1

The behavior with respect to \textit{robust accuracy} is even more interesting. For small $\epsilon$~($\epsilon=\nicefrac{2}{255}$), the robust accuracy shows the same trend as the clean accuracy, albeit with lower values. For large $\epsilon$~($\epsilon=\nicefrac{8}{255}$), the model incurs CO for most values of $\beta$. This is not surprising as CO has already been reported for this value of $\epsilon$ on the original CIFAR-10 dataset \citep{RS-FGSM}. However, the interesting setting is for intermediate values of $\epsilon$~($\epsilon\in\{\nicefrac{4}{255},\nicefrac{6}{255}\}$). For these settings, \cref{fig:beta-epsilon}~(right) distinguishes three regimes:\looseness=-1

\begin{enumerate}[label=(\roman*)] 
\item When the injected features are weak ($\beta\ll \epsilon$), the robust accuracy is similar as with the original data.\looseness=-1
\item When they are strong ($\beta\gg\epsilon$), robustness is high as using $\bm v(y)$ is enough to classify $\xt$ robustly.\looseness=-1
\item When they are mildly robust ($\beta\approx \epsilon$), training suffers from CO and the robust accuracy drops to zero.\looseness=-1
\end{enumerate} 
This last regime is very significant, as training on the original dataset $\mathcal{D}$ ($\beta=0$) does not suffer from CO for this value of $\epsilon$. In the following section, we delve deeper into these results to better understand the cause of CO.\looseness=-1

\paragraph{Inducing CO in other settings} To ensure generality, we replicate our previous experiment using different settings. Specifically, we show in \cref{ap:beta-epsilons} that our intervention can induce CO in other datasets such as CIFAR-100 and SVHN~\citep{svhn}, and higher-resolution ones such as TinyImageNet~\citep{towards} and ImageNet-100~\citep{pmlr-v180-kireev22a}. We also reproduce the effect using larger networks like a WideResNet28x10~\citep{wideresnet} and for $\ell_2$ perturbations. We, therefore, confidently conclude that there is a link between the structure of the data and CO, independently of actual datasets, and models.\looseness=-1

\section{Networks prefer injected features}\label{sec:both_features}
Since we now have a method to intervene in the data using \cref{eq:injection} and induce CO, we can use it to better characterize the mechanisms that lead to CO. The previous experiments showed that when $\beta \ll \epsilon$ or $\beta \gg \epsilon$, our data intervention does not induce CO. However, when $\beta\approx\epsilon$, FGSM-AT consistently experiences CO. This raises the following question: what makes $\beta\approx\epsilon$ special? We now show that for $\beta \approx \epsilon$, a network trained using AT uses information from both the original dataset $\mathcal{D}$ and the injected features in $\mathcal{V}$ to achieve high robust accuracy on the injected dataset $\Db$. However, when trained without any adversarial constraints~\ie for standard training, the network only uses the features in~\(\mathcal{V}\) and achieves close to perfect clean accuracy.\looseness=-1

In order to understand what features are learned when training on the injected dataset $\widetilde{\mathcal{D}}_\beta$, we evaluate the performance of different models on three distinct test sets: \begin{enumerate*}[label=(\roman*)]
\item CIFAR-10 test set with injected features ($\Db$),
\item original CIFAR-10 test set ($\mathcal{D}$), and
\item CIFAR-10 test set with shuffled injected features ($\Dpb$) where the additive signals are correlated with a \textit{permuted} set of labels, \ie
\end{enumerate*}\looseness=-1
\begin{equation}\label{eq:Dpb-const}
(\widetilde{\bm x}^{(\pi)}, y)\sim\Dpb:\quad \widetilde{\bm x}^{(\pi)}=\bm x + \beta\, \bm v(\pi(y)) ,
\end{equation}
with $(\x,y) \sim \mathcal{D}$ and $\bm v\in\mathcal{V}$. Here, $\pi: \mathcal{Y} \to \mathcal{Y}$ is a fixed permutation to shuffle the labels. Evaluating these models on $\Dpb$ exposes them to contradictory information, since $\bm x$ and $\bm v(\pi(y))$ are correlated with different labels\footnote{We also evaluate on $\mathcal{V}$ alone in \cref{ap:perf_v}.}. Thus, if the classifier only relies on $\mathcal{V}$, the performance should be high on $\Db$ and low on $\mathcal{D}$ and $\Dpb$, while if it only relies on $\mathcal{D}$, the performance should remain constant for all injected datasets. \Cref{fig:per_dataset} shows the results of this evaluation during standard, FGSM-AT and PGD-AT on $\Db$ with with $\beta=\nicefrac{8}{255}$ and $\epsilon=\nicefrac{6}{255}$.

\textbf{PGD training:} We can conclude that the PGD-AT network achieves a robust solution using both $\mathcal{D}$ and $\mathcal{V}$ from~\cref{fig:per_dataset} (left). It uses $\mathcal{D}$ as it achieves better than trivial accuracy on $\mathcal{D}$ and $\Dpb$. But it also uses $\mathcal{V}$ as it achieves higher accuracy on $\Db$ than $\mathcal{D}$, and suffers from a drop in performance on $\Dpb$. This implies that this network effectively combines information from both the original and injected features for classification.

\begin{figure*}
    \centering
    \includegraphics[width=\textwidth]{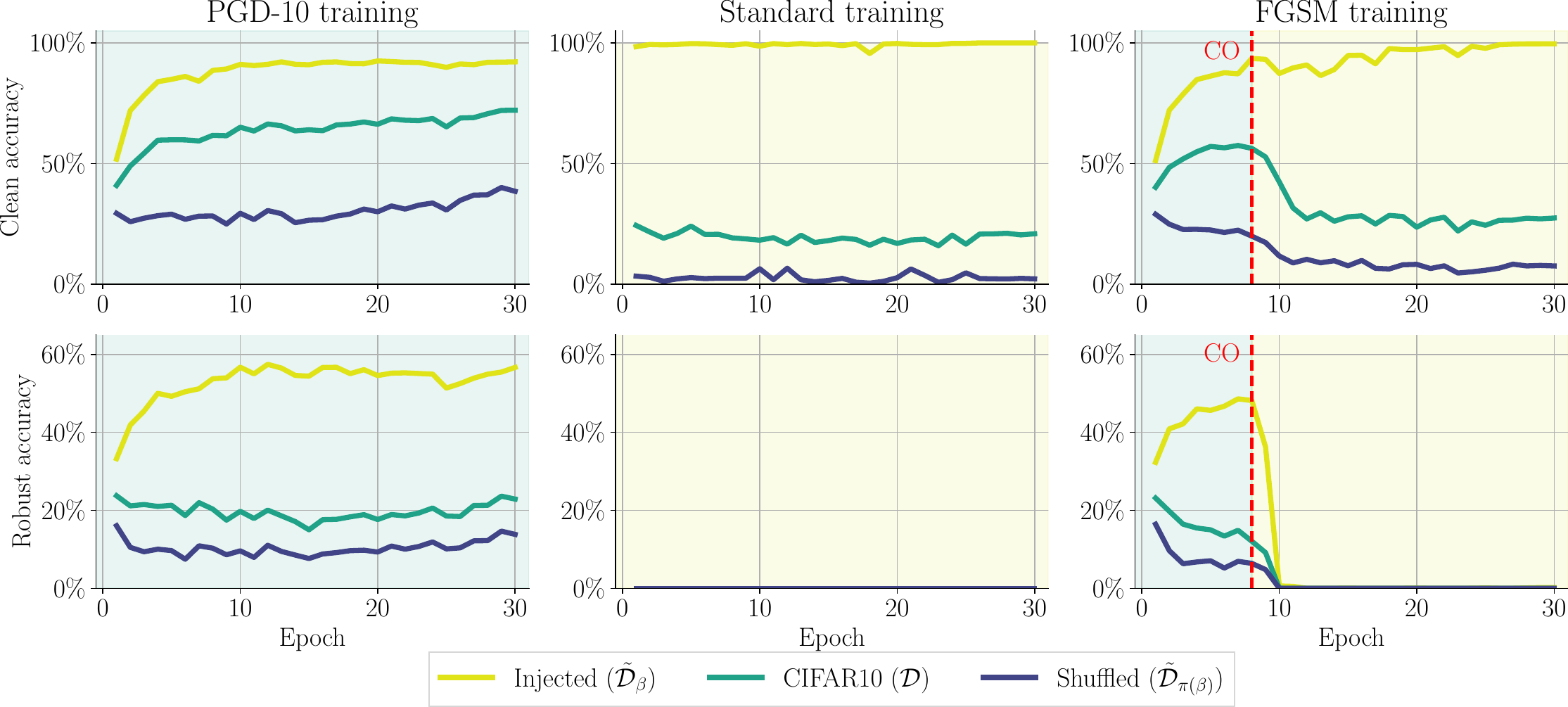}
    \caption{Clean (\textbf{top}) and robust (\textbf{bottom}) accuracy on 3 different test sets: (i) original CIFAR-10 ($\mathcal{D}$), (ii) dataset with injected features $\widetilde{\mathcal{D}}_\beta$ and (iii) dataset with shuffled injected features $\Dpb$. All runs use $\beta=\nicefrac{8}{255}$ and $\epsilon=\nicefrac{6}{255}$. Blue shading denotes the use of both $\mathcal{D}$ and $\mathcal{V}$ and yellow only $\mathcal{V}$.\looseness=-1}
    \label{fig:per_dataset}
\end{figure*}

\textbf{Standard training:} Standard training shows a different behaviour than PGD-AT (see \cref{fig:per_dataset}~(center)). In this case, the network achieves excellent clean accuracy on $\Db$, but trivial accuracy on $\mathcal{D}$, indicating that it ignores the information present in $\mathcal{D}$ and only uses non-robust features from $\mathcal{V}$ for classification. This is further supported by its near-zero accuracy on $\Dpb$. We conclude that the injected features are easy to learn \ie they are preferred by standard training.\looseness=-1

\textbf{FGSM training:} The behaviour of FGSM-AT in~\cref{fig:per_dataset} (right) highlights further the preference for the injected features. FGSM-AT undergoes CO around epoch 8 and its robust accuracy on $\Db$ drops to zero despite its high clean accuracy on $\Db$. It presents two distinct phases:\looseness=-1
\begin{enumerate}[label=(\roman*)] 
\item Prior to CO the robust accuracy on $\Db$ is non-zero and the network exploits both $\D$ and $\mathcal{V}$, similar to PGD-AT.\looseness=-1
\item After CO both the clean and robust accuracy on $\mathcal{D}$ and $\Dpb$ drop, exhibiting behavior similar to standard training. This indicates that, after CO, the network forgets  $\mathcal{D}$ and solely relies on the features in $\mathcal{V}$.
\end{enumerate}

\paragraph{Why does FGSM change the learned features after CO?} From the behaviour of standard training, we concluded that the network has a preference for the injected features $\mathcal{V}$, which are easy to learn. Meanwhile, the behaviour of PGD training suggests that when the easy features are not sufficient for robust classification, the model combines them with other (harder-to-learn) features, such as those in $\mathcal{D}$. FGSM initially learns a robust solution using both $\D$ and $\mathcal{V}$, similar to PGD. However, \textit{if the FGSM attacks are rendered ineffective, the robustness constraints are removed}, allowing the network to revert back to the simple features and the performance on the original dataset $\D$ drops. This is exactly what occurs with the onset of CO around epoch 8, which we will further discuss in \cref{sec:curvature_exp}. As we will see, the key to understanding why CO happens lies in how learning each type of feature influences the local geometry of the classifier.\looseness=-1

\section{Geometry of CO} \label{sec:geometry}

Recent works have shown that after the onset of CO, the local geometry around the input $\x$ becomes highly non-linear~\citep{grad_align, AAAI, nfgsm}. Motivated by these, and to better understand induced CO, we investigate the evolution of the curvature of the loss landscape when this happens.\looseness=-1

\subsection{Curvature explosion drives CO}\label{sec:curvature_exp}

To measure curvature, we use the average maximum eigenvalue of the Hessian on $N=100$ training points $\bar{\lambda}_\text{max}=\frac{1}{N}\sum_{n=1}^{N}\lambda_\text{max}\left(\nabla_{\bm \xt}^2\mathcal{L}(f_{\bm\theta}(\xt_n), y_n)\right)$ as in~\citet{moosavi-dezfooliRobustnessCurvatureRegularization2019a}), and record it throughout training. \Cref{fig:curvature-injection}~(left) shows the result of this experiment for FGSM-AT~(orange line) and PGD-AT~(green line) training on $\Db$ where CO with FGSM-AT happens around epoch 8.\looseness=-1

\paragraph{Two-phase curvature increase}~Interestingly, we observe a steep increase in curvature for \emph{both} FGSM-AT and PGD-AT (the $y$-axis is in logarithmic scale) before CO. However, while there is a large increase in curvature for PGD-AT right before the \(8^{\it th}\) epoch, it stabilizes soon after -- PGD-AT acts as a regularizer on the curvature~\citep{moosavi-dezfooliRobustnessCurvatureRegularization2019a,local_linear} which explains how it controls the curvature explosion. However, curvature is a second-order property of the loss and unlike PGD, FGSM is based on a coarse linear~(first order) approximation of the loss. In this regard, we see that FGSM-AT cannot contain the curvature increase, which eventually explodes around the \(8^{\it th}\) epoch and saturates at a very large value. The final curvature of the FGSM-AT model is 100 times that of the PGD-AT model.\looseness=-1

\begin{figure*}[t]
    \centering
    \includegraphics[width=\textwidth]{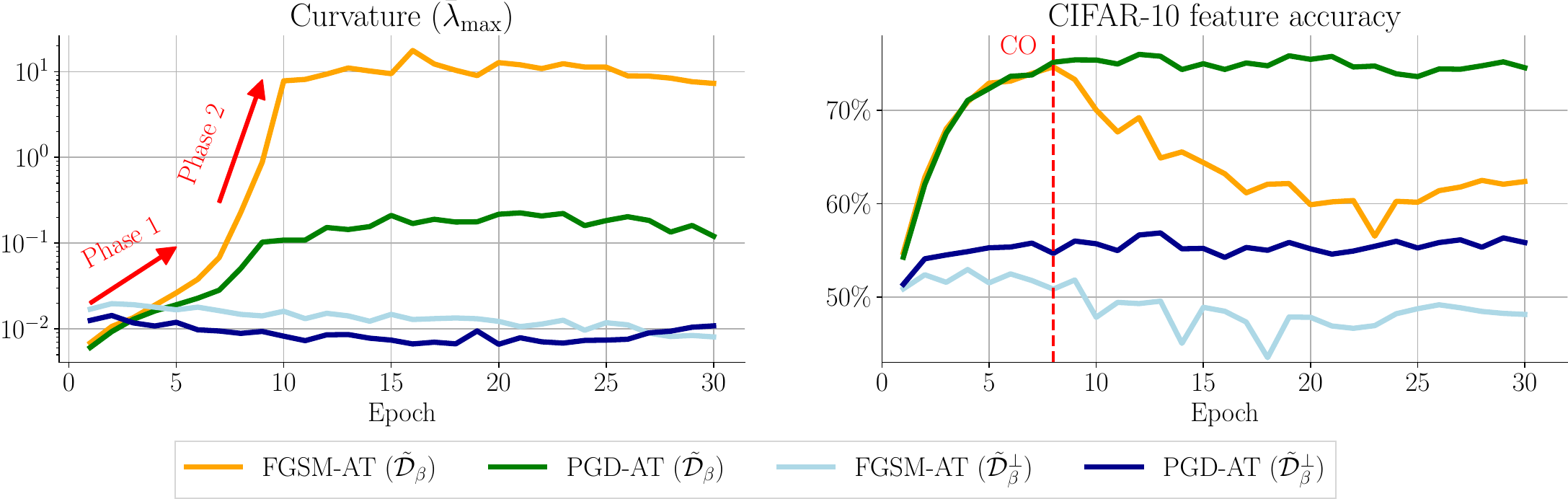}
    \caption{Evolution of metrics for FGSM-AT and PGD-AT on dataset with injected features ($\widetilde{\mathcal{D}}_\beta$) and with orthogonally projected features ($\widetilde{\mathcal{D}}^\perp_\beta$), where there is no interaction between features. AT is performed with $\beta=\nicefrac{8}{255}$ and $\epsilon=\nicefrac{6}{255}$.}
    \label{fig:curvature-injection}
\end{figure*}

\paragraph{High curvature leads to meaningless perturbations} The fact that the curvature increases rapidly alongside the FGSM accuracy during CO agrees with the findings of~\citet{grad_align}. The loss becomes highly non-linear and thus reduces the success rate of FGSM, which assumes a linear loss. \begin{wrapfigure}[17]{r}{0.5\textwidth}
% \vskip-10
\centering
\includegraphics[width=0.5\textwidth]{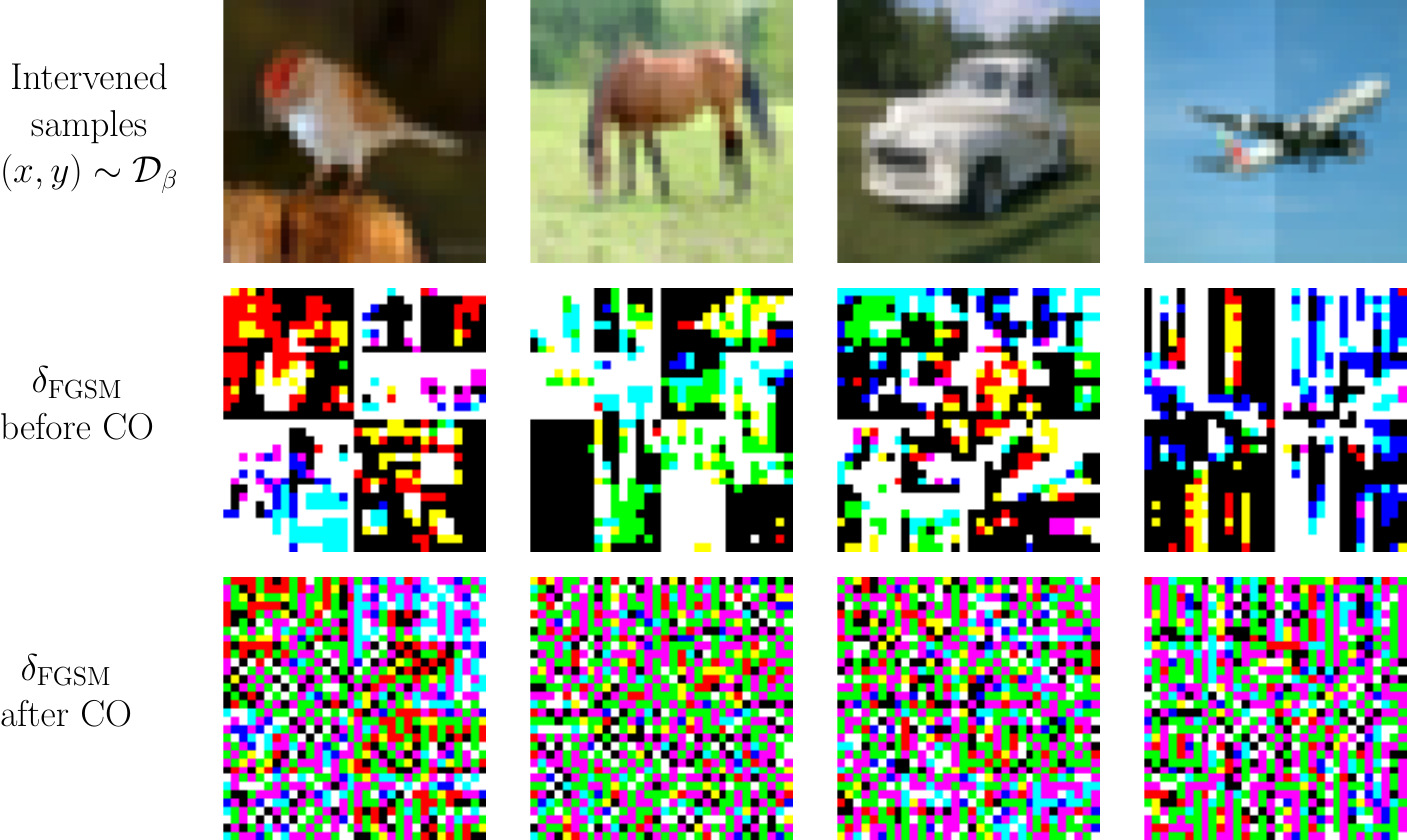}
\caption{Samples of the injected dataset $\Db$ (\textbf{top}) and FGSM perturbations before (\textbf{middle}) and after CO (\textbf{bottom}). Before CO, perturbations focus on synthetic features, but after CO they become noisy.}
\label{fig:adv_perturbations}
\end{wrapfigure}
To show that CO indeed occurs due to the increased curvature breaking FGSM, we visualise the adversarial perturbations before and after CO. As observed in~\cref{fig:adv_perturbations}, before CO, the adversarial perturbations are non-trivially aligned with $\operatorname{span}(\mathcal{V})$, albeit with some corruptions originating from $\x$. Nonetheless, after CO, the new adversarial perturbations point towards meaningless directions; they do not align with $\mathcal{V}$ even though the network is heavily reliant on this information for classifying the data\footnote{We also quantify this alignment in \cref{fig:alignment}.} (cf.~\cref{sec:both_features}). This reinforces the idea that the increase in curvature causes a breaking point after which FGSM is no longer effective. This behavior of adversarial perturbations after CO is different from the one on standard and PGD-AT networks in which the perturbations align with discriminative directions~\citep{fawzi2018empirical,jetley2018friends,ilyas2019adversarial,ortiz2020hold}.

\paragraph{Two-phase curvature increase also occurs in naturally ocurring CO} In \cref{fig:curvature_cifar} we show the curvature when training on the original CIFAR-10 dataset with $\epsilon=\nicefrac{8}{255}$ (where CO happens for FGSM-AT). Similarly to our observations on the injected datasets, the curvature during FGSM-AT explodes along with the training accuracy while for PGD-AT the curvature increases at a very similar rate than FGSM-AT during the first epochs and later stabilizes. This indicates that our described mechanisms may as well apply to induce CO on natural image datasets.

\begin{figure}[t]
    \centering
    \includegraphics[width=\textwidth]{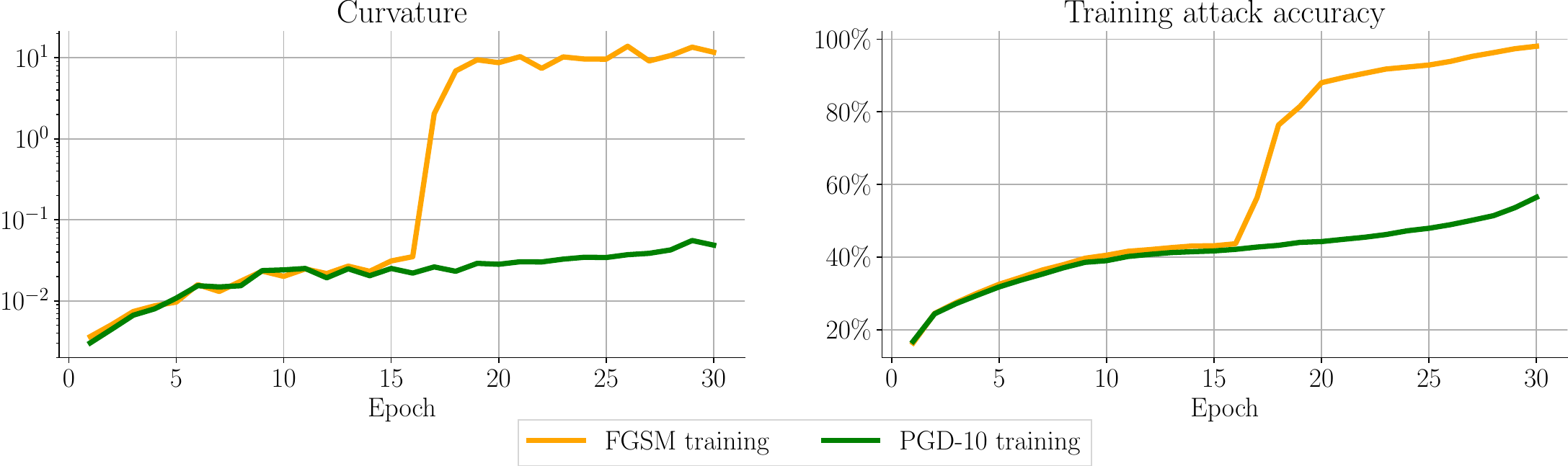}
    \caption{Evolution of curvature and training attack accuracy of FGSM-AT and PGD-AT trained on the original CIFAR-10 with $\epsilon=\nicefrac{8}{255}$. When CO happens the curvature explodes.}
    \label{fig:curvature_cifar}
\end{figure}
\subsection{Curvature increase is a result of feature interaction}
\label{sec:feature_interaction}

But why does the network increase the curvature in the first place? In~\Cref{sec:both_features}, we observed that this is a shared behavior of PGD-AT and FGSM-AT, at least during the initial stage before CO. Therefore, it should not be a mere ``bug". We conjecture that the curvature increase is a result of the interaction between features of the dataset which forces the network to increase its non-linearity in order to combine them effectively for robustness.\looseness=-1

\paragraph{Curvature does not increase without interaction} To demonstrate this, we perform a new experiment in which we modify $\mathcal{D}$ again (as in \cref{sec:inducing_CO}). However, this time, we ensure that there is no interaction between the synthetic features $\bm v(y)$ and the features from $\mathcal{D}$ creating $\Dob$ such that:\looseness=-1
\begin{equation}
       (\widetilde{\bm x}^\perp , y)\sim\Dob:\quad \widetilde{\bm x}^\perp =\mathcal{P}_{\mathcal{V}^\perp}(\bm x) + \beta \bm v(y) 
\end{equation}
with $(\bm x,y)\sim\mathcal{D}$ and $\bm v(y)\in\mathcal{V}$, where~$\mathcal{P}_{\mathcal{V}^\perp}$ denotes the projection operator onto the orthogonal complement of $\mathcal{V}$. Since the synthetic features $\bm v(y)$ are orthogonal to \(\mathcal{D}\), a simple linear classifier relying only on $\bm v(y)$ can robustly separate the data up to a radius that depends solely on $\beta$ (see the theoretical construction in \cref{ap:math}).\looseness=-1

Interestingly, we find that, in this dataset, none of the $(\beta, \epsilon)$ configurations used in~\cref{fig:beta-epsilon-perp} induce CO. Here, we observe only two regimes: one that ignores $\mathcal{V}$ (when $\beta < \epsilon$) and one that ignores $\mathcal{D}$ (when $\beta > \epsilon$). This supports our conjecture that the interaction between the features of $\bm x$ and $\bm v(y)$ causes CO in $\widetilde{\mathcal{D}}_\beta$. Moreover, \cref{fig:curvature-injection}~(left) shows that, when performing either FGSM-AT~(light blue) or PGD-AT~(dark blue) on $\Dob$, the curvature is consistently low. This agrees with the fact that in this case there is no need for the network to combine the injected and original features for robustness and hence the network does not need to increase its non-linearity to separate the data.\looseness=-1

\paragraph{Non-linear feature extraction} Finally, we study the relationship between the features extracted by the network and its curvature. We train multiple logistic classifiers to classify $\D$ based on the feature representations (output of the penultimate layer) of networks trained on $\Db$ as in~\citet{shi2022robust}. The \emph{feature accuracy} of these classifiers will depend on how well the network has learned to extract information from $\D$.\looseness=-1

\Cref{fig:curvature-injection}~(right) shows that for the PGD-AT network (green), the feature accuracy on $\D$ progressively increases over time, indicating that the network has learned to extract meaningful information from $\D$ even though it was trained on $\Db$. The feature accuracy also closely matches the curvature trajectory in \cref{fig:curvature-injection}~(left). In contrast, the FGSM-AT network (red) exhibits two phases on its feature accuracy: Initially, it grows at a similar rate as the PGD-AT network, but after CO occurs, it starts to decrease. This decrease in feature accuracy does not correspond with a decrease in curvature, which we attribute to the network taking a shortcut and ignoring $\D$ in favor of easy, non-robust features.

When we use features from networks trained on $\Dob$, we find that the accuracy on $\D$ is consistently low, suggesting that the network is increasing its curvature to improve its feature representation as it must combine information from $\V$ with $\D$ to become robust in this case.

\section{A mechanistic explanation of CO}\label{sec:summary}

Based on the previous insights, we can finally summarize the chain of events that leads to CO in our injected datasets:
\begin{enumerate}[label=(\roman*),leftmargin=6mm,topsep=0pt,itemsep=0ex,partopsep=1ex,parsep=1ex]
\item To learn a robust solution, the network combine easy non-robust features with complex robust ones. However, without robustness constraints, the network favors learning \emph{only} the non-robust features (see \cref{sec:both_features}).\looseness=-1
\item When learning both kinds of features simultaneously, the network increases its non-linearity to improve its feature extraction ability (see \cref{sec:feature_interaction}).
\item This increase in non-linearity provides a shortcut to break FGSM, triggering CO and allowing the network to avoid learning the complex robust features while still achieving a high accuracy using only the easy non-robust ones (see \cref{sec:curvature_exp}). 
\end{enumerate}

\paragraph{How general is this mechanism?}
So far, all our experiments have dealt with datasets that have been synthetically intervened by injecting artificial signals. However, we advocate that our work provides sufficient and compelling evidence that features are the most likely cause of CO in general. Deep learning is an empirical discipline, where many insights are gained from experimental studies, and as shown by prior work~\citep{arpit2017closer, ilyas2019adversarial, shah2020pitfalls, nads}, manipulating data in controlled ways is a powerful tool to infer the structure of its features.  In this light, we are confident our work provides robust and comprehensive evidence that features are likely the primary catalyst of CO across different datasets.

Our study includes extensive ablations with various types of injected features, such as random directions, and different norms and we have shown these settings also lead to CO (see \Cref{ap:beta-epsilons}). Furthermore, we have observed that the increase in curvature within our manipulated datasets mirrors the patterns found in naturally occurring CO instances, as demonstrated in \Cref{fig:curvature_cifar}.

Beyond our empirical investigations, we also provide theoretical support for our conjecture in \Cref{ap:amartya}. The concept that a robust classifier may need to combine different features can be formally proven in certain scenarios. Specifically, in \Cref{ap:amartya}, we offer a mathematical proof showing that numerous learning scenarios exist in which learning a robust classifier requires leveraging additional non-linear features on top of the simple ones used for the clean solution.

Finally, in what follows, we will present further evidence that CO is generally induced by features. Specifically, we will show that other feature manipulations besides the injection of features can also influence CO. Additionally, we will show that the primary strategies effective at preventing CO in natural datasets can also mitigate CO in our manipulated datasets. This suggests a commonality between these types of CO, further underscoring the widespread relevance of our mechanism
\looseness=-1

\section{Analysis of CO prevention methods}\label{sec:discussion}

\begin{wraptable}[14]{r}{0.5\textwidth}
\vspace{-0.3cm}
\caption{Clean and robust accuracies of FGSM-AT and PGD-AT trained networks on CIFAR-10 and the low pass version described in \citet{ortiz2020hold} at different $\epsilon$.\looseness=-1 }\label{tab:low-pass-cifar}
\centering
\begin{tabular}{@{}r@{\enskip}c@{\enskip}c@{\quad}c@{\enskip}c@{\enskip}}
\toprule
\multicolumn{1}{c}{\multirow{2}{*}{Method ($\epsilon)$}} & \multicolumn{2}{c}{Original} & \multicolumn{2}{c}{Low pass} \\ \cmidrule(l){2-5} 
\multicolumn{1}{c}{}                        & Clean        & Robust        & Clean        & Robust        \\ \midrule
FGSM ($\nicefrac{8}{255}$)               & 85.6         & 0.0           & 81.1         & 47.0          \\
PGD ($\nicefrac{8}{255}$)                & 80.9         & 50.6          & 80.3         & 49.7          \\ \midrule
FGSM ($\nicefrac{16}{255}$)              & 76.3         & 0.0           & 78.6         & 0.0           \\
PGD ($\nicefrac{16}{255}$)               & 68.0         & 29.2          & 66.9         & 28.4          \\ \bottomrule
\end{tabular}
\end{wraptable}

Our proposed dataset intervention, defined in \cref{sec:inducing_CO}, has allowed us to better understand the chain of events that lead to CO. In this section, we will focus on methods to prevent CO and analyze them in our framework for further insights.

\begin{figure*}[t]
    \centering
    \includegraphics[width=\textwidth]{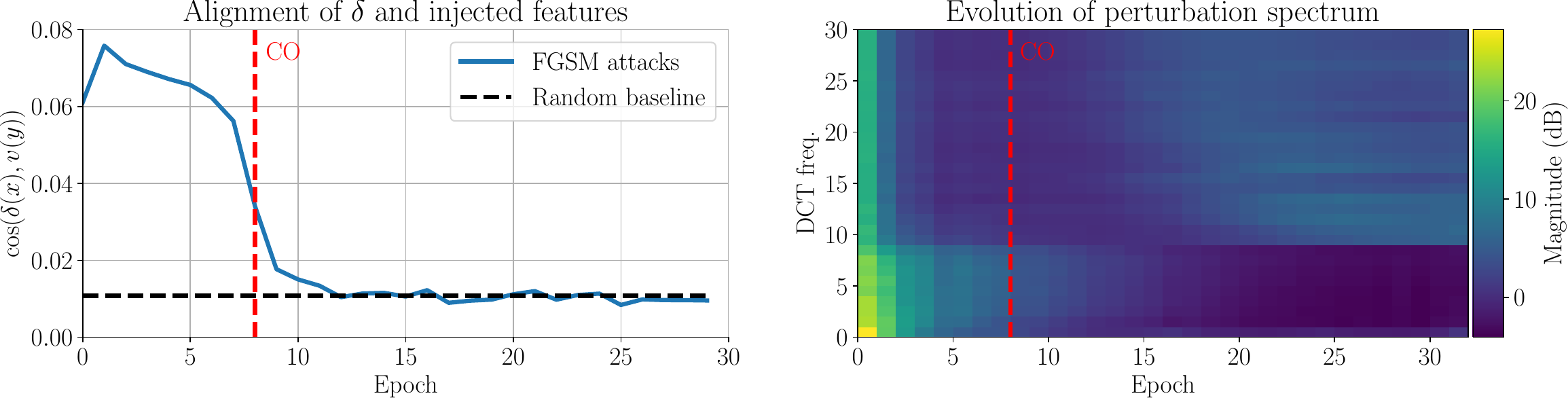}
    \caption{Quantitative analysis of the directionality of FGSM perturbations before and after CO in CIFAR-10 with $\beta=\nicefrac{8}{255}$ and $\epsilon=\nicefrac{6}{255}$. \textbf{Left:} Alignment of FGSM perturbations with injected features during training. \textbf{Right:} Magnitude of the DCT spectrum of FGSM perturbations during training. The plot shows values for the diagonal components at every epoch.\looseness=-1}
    \label{fig:alignment}
\end{figure*}

\paragraph{High-frequency features and CO}
Recently, \citet{flc_pooling} suggested that CO could be caused by aliasing effects from downsampling layers and proposed using a low-pass filter before pooling. We take this theory a step further and show that just removing the high-frequency components from CIFAR-10 consistently prevents CO at $\epsilon=\nicefrac{8}{255}$ (where FGSM-AT fails). Furthermore, in~\Cref{fig:alignment} (right) we observe that after CO on $\Db$, FGSM attacks ignore low-frequencies and become increasingly high-frequency.\looseness=-1

Surprisingly, though, applying the same low-pass technique at $\epsilon=\nicefrac{16}{255}$ does not work and neither does \citet{flc_pooling} (see \cref{sec:FreqLowCutPooling}). We conjecture this is because the features robust at $\epsilon=\nicefrac{8}{255}$ in the low pass version of CIFAR-10 might not be robust at $\epsilon=\nicefrac{16}{255}$ and although high-frequency features might contribute to CO, the overall set of CO inducing features is more complex. Additionally, we observe that CO can be induced even when the injected features are randomly sampled (see~\Cref{fig:beta-epsilon-rand}). Although removing low-frequency features does not work in all settings, we see this as a proof of concept that manipulating the data itself may prevent CO in some settings. Generalizing this idea to other datasets is a promising avenue for future work.\looseness=-1

\begin{figure*}[t]
    \centering
    \includegraphics[width=\textwidth]{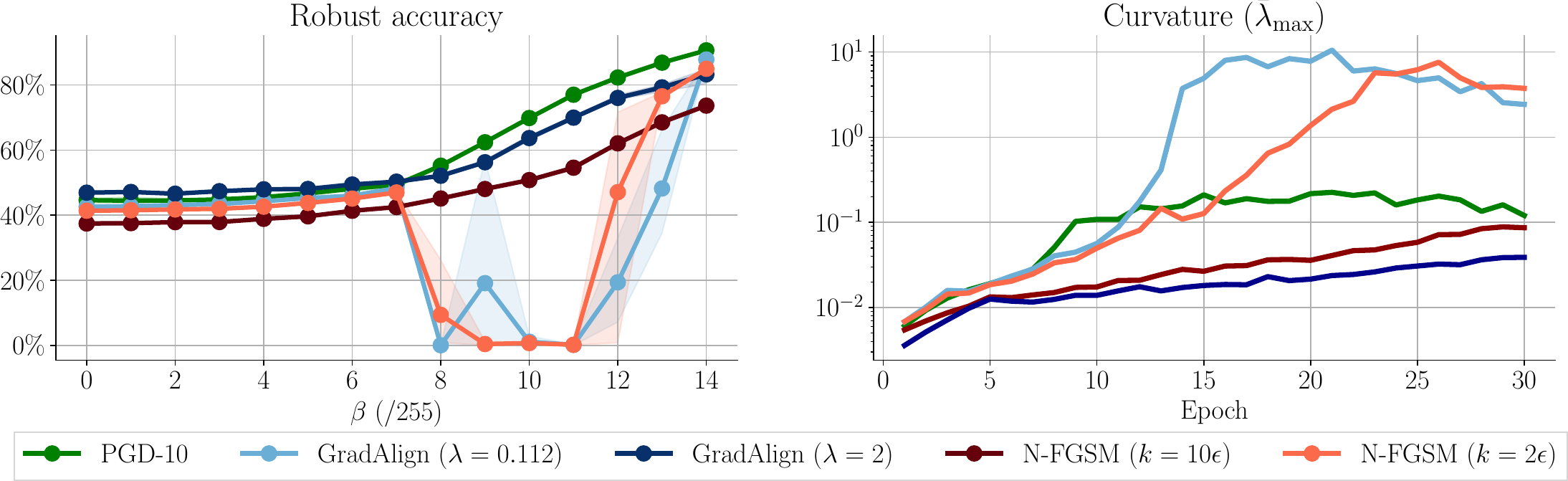}
    \caption{\textbf{Left:} Clean and robust accuracy of GradAlign, N-FGSM and PGD-AT. \textbf{Right:} Curvature evolution of all methods. Training performed on $\Db$ with $\epsilon=\nicefrac{6}{255}$ and $\beta=\nicefrac{8}{255}$ and averaged over 3 random seeds.}
    \label{fig:beta-epsilon-solutions}
\end{figure*}

\paragraph{Preventing CO in our injected dataset} In~\cref{fig:per_dataset}~(left), we have shown that using PGD-AT prevents CO on our injected dataset, but we now also evaluate the effectiveness of GradAlign and N-FGSM. Our results indicate that these methods can also be successful in preventing CO, but need stronger regularization for certain values of $\beta$. As $\beta$ grows, the injected feature $\bm v(y)$ becomes more discriminative, creating a stronger incentive for the network to use it. This increases the chances of CO as the network increases its curvature to combine the injected features with others in order to become robust. \Cref{fig:beta-epsilon-solutions} shows that the curvature of N-FGSM and GradAlign decreases with stronger regularizations, similar to PGD-AT. This supports the idea that preventing the curvature from exploding can indeed prevent CO.\looseness=-1

\begin{wrapfigure}{r}{0.5\textwidth}
\vspace{-0.3cm}
    \centering
    \includegraphics[width=0.5\textwidth]{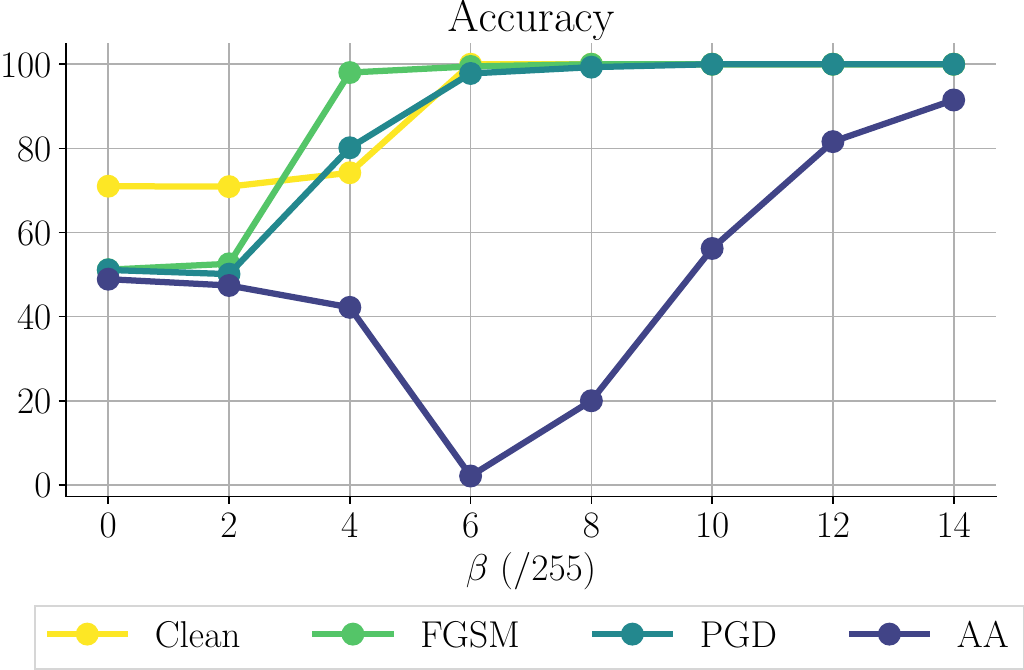}
    \caption{Clean, FGSM, PGD and AutoAttack accuracy on injected ImageNet-100 with varying $\beta$. FGSM-AT performed wtih $\epsilon=\nicefrac{4}{255}$.}
    \label{fig:imagenet_aa}
\end{wrapfigure}

\paragraph{Stronger CO on high-resolution datasets}
As discussed in~\cref{sec:inducing_CO}, we observe CO when injecting discriminative features in several datasets. Interestingly, for ImageNet-100, we have observed that even multi-step attacks like PGD are not able to find adversarial perturbations after CO (see~\cref{fig:imagenet_aa}). However, when using AutoAttack (AA)~\citep{croce2020reliable}, we find that models are not robust\footnote{On other datasets, AA and PGD work equally.\looseness=-1}. To the best of our knowledge, this had not been observed before with ``vanilla" CO, where after CO, models would be vulnerable to very weak PGD attacks, like PGD with 10 steps~\citep{RS-FGSM}. We argue that in this case, the curvature increase that breaks FGSM also hinders PGD optimization. This aligns with our findings with N-FGSM and GradAlign, where stronger regularization is needed to prevent CO with the injected datasets, and suggests that CO is \textit{stronger} in $\Db$.\looseness=-1

\clearpage

\section{Concluding remarks}

In this work, we have presented a thorough empirical study to establish a link between the features of the data and the onset of CO in FGSM-AT. Using controlled data interventions, we have shown that CO is a learning shortcut used by the network to avoid learning complex robust features while achieving high accuracy using easy non-robust ones. This new perspective sheds new light on the mechanisms that trigger CO, as it shifts our focus towards studying the way the data structure influences the learning algorithm. We believe this opens the door to promising future work towards understanding the intricacies of these mechanisms, deriving methods for inspecting data and identifying feature interactions, and finding faster ways to make neural networks robust.\looseness=-1

\section*{Broader impact}

The broader social implications of this research extend beyond its academic scope, touching upon the spheres of the entire field of adversarial machine learning. At its core, our work aims to enhance the robustness and reliability of AI systems, thereby addressing critical challenges in various sectors such as healthcare and autonomous driving. In these domains, mitigating the vulnerability to adversarial examples can lead to more accurate and reliable diagnoses and treatments, as well as safer navigational decisions during unforeseen events like storms.

Moreover, having a better understanding of CO has the potential to democratize access to robust machine learning models. By enabling faster adversarial training at a fraction of the cost, we could make advanced AI tools more accessible to smaller organizations and independent researchers. This democratization could spur innovation across various sectors, broadening the positive impacts of AI.

However, while these benefits are significant, it is crucial to acknowledge the potential drawbacks. An increase in the robustness of AI systems, while generally beneficial, can inadvertently strengthen systems employed for detrimental purposes or in exploitative circumstances, as illustrated by \citet{albert_1,albert_2}. These paradoxical situations, such as those occurring under warfare conditions or within authoritarian regimes, underscore the fact that enhancing AI system robustness can sometimes lead to negative societal impacts.

Our work in this paper is purely academic, focusing on the technical understanding and prevention of CO. While we aim to improve AI systems' robustness and democratize access to them, it is the responsibility of policymakers, technologists, and society at large to ensure these advancements are employed ethically and for the greater good.

\section*{Acknowledgements}
We thank Maksym Andriushchenko, Apostolos Modas, Seyed-Mohsen Moosavi-Dezfooli and Ricardo Volpi for the fruitful discussions and feedback. This work is supported by the UKRI grant: Turing AI Fellowship EP/W002981/1 and EPSRC/MURI grant: EP/N019474/1. We would also like to thank the Royal Academy of Engineering and FiveAI. Guillermo Ortiz-Jimenez acknowledges travel support from ELISE (GA no 951847) in the context of the ELLIS PhD Program. Amartya Sanyal acknowledges partial support from the ETH AI Center.\looseness=-1

\clearpage
\bibliography{main}
\bibliographystyle{tmlr}

%%%%%%%%%%%%%%%%%%%%%%%%%%%%%%%%%%%%%%%%%%%%%%%%%%%%%%%%%%%%%%%%%%%%%%%%%%%%%%%
%%%%%%%%%%%%%%%%%%%%%%%%%%%%%%%%%%%%%%%%%%%%%%%%%%%%%%%%%%%%%%%%%%%%%%%%%%%%%%%
% APPENDIX
%%%%%%%%%%%%%%%%%%%%%%%%%%%%%%%%%%%%%%%%%%%%%%%%%%%%%%%%%%%%%%%%%%%%%%%%%%%%%%%
%%%%%%%%%%%%%%%%%%%%%%%%%%%%%%%%%%%%%%%%%%%%%%%%%%%%%%%%%%%%%%%%%%%%%%%%%%%%%%%

\newpage
\appendix

\onecolumn

\section{Analysis of the separability of the injected datasets} \label{ap:math}
With the aim to illustrate how the interaction between $\mathcal{D}$ and $\mathcal{V}$ can influence the robustness of a classifier trained on $\Db$ we now provide a toy theoretical example in which we discuss this interaction. Specifically, without loss of generality, consider the binary classification setting on the dataset $(\x, y)\sim\D$ where $y\in\{-1, +1\}$ and $\|\x\|_2=1$, for ease. Let's now consider the injected dataset $\Db$ and further assume that $\bm v(+1)=\bm u$ and $\bm v(-1)=-\bm u$ with $\bm u \in\R^d$ and $\|\bm u\|_2=1$, such that $\xt=\x+\beta y \bm u$. Moreover, let $\gamma \in [0,1]$ denote the \emph{interaction coefficient} between $\D$ and $\V$, such that  $-\gamma \leq \x^\top \bm u \leq \gamma$. 

We are interested in characterizing the robustness of a classifier that only uses information in $\V$ when classifying $\Db$ depending on the strength of the interaction coefficient. In particular, as we are dealing with the binary setting, we will characterize the robustness of a linear classifier $h:\R^d\to\{-1, +1\}$ that discriminates the data based only on $\V$, \ie $h(\xt) = \operatorname{sign}(\bm u^\top \xt)$. In our setting, we have
\begin{align*}
    &\bm u^\top \xt = \bm u^\top \x + \beta \bm u^\top \bm u = \bm u^\top \x + \beta~~ \text{ if } y = +1\\
    &\bm u^\top \xt = \bm u^\top \x - \beta \bm u^\top \bm u = \bm u^\top \x - \beta~~ \text{ if } y = -1\\
\end{align*}

\begin{proposition}[Clean performance]
If $\beta > \gamma$, then $h$ achieves perfect classification accuracy on $\Db$.
\end{proposition}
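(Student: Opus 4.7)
The plan is to verify the claim by direct case analysis on the label $y \in \{-1,+1\}$, using the two displayed expressions for $\bm u^\top \xt$ that appear immediately before the proposition together with the interaction bound $-\gamma \le \bm u^\top \x \le \gamma$. Since the classifier $h(\xt) = \operatorname{sign}(\bm u^\top \xt)$ depends only on the sign of $\bm u^\top \xt$, it suffices to show that this sign matches $y$ for every sample drawn from $\Db$.

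Concretely, I would proceed as follows. First, for a sample with $y=+1$, substitute into $\bm u^\top \xt = \bm u^\top \x + \beta$ and use the lower bound $\bm u^\top \x \ge -\gamma$ to conclude
\begin{equation*}
\bm u^\top \xt \;\ge\; \beta - \gamma \;>\; 0,
\end{equation*}
where the strict inequality uses the hypothesis $\beta > \gamma$. Hence $h(\xt) = +1 = y$. Second, for a sample with $y=-1$, substitute into $\bm u^\top \xt = \bm u^\top \x - \beta$ and use the upper bound $\bm u^\top \x \le \gamma$ to conclude
\begin{equation*}
\bm u^\top \xt \;\le\; \gamma - \beta \;<\; 0,
\end{equation*}
so that $h(\xt) = -1 = y$. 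Since every $(\xt,y)\sim\Db$ falls into one of these two cases, $h$ classifies all samples correctly.

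There is essentially no obstacle here: the proposition is a one-line consequence of the two expressions for $\bm u^\top \xt$ that the paper has already derived, combined with the definition of the interaction coefficient $\gamma$. The only thing to be careful about is ensuring the inequalities are strict, which comes from the strict hypothesis $\beta > \gamma$ (so that points with $\bm u^\top \x$ exactly at the boundary $\pm\gamma$ still yield a strictly nonzero margin and an unambiguous sign). This strict margin will also be the natural starting point for the companion robustness statement, since $\beta - \gamma$ plays the role of the classification margin along the $\bm u$ direction.
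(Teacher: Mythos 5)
Your proof is correct and follows the same route as the paper's: a direct case analysis on $y=\pm 1$ showing that the sign of $\bm u^\top\xt$ matches the label because $|\bm u^\top\x|\le\gamma<\beta$. Your write-up is in fact slightly more careful than the paper's (which contains a sign typo in the $y=-1$ case), but the underlying argument is identical.
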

\begin{proof}
Observe that if $\gamma = 0$, i.e. the features from original dataset $\mathcal{D}$ do not interact with the injected features $\V$, the dataset is perfectly linearly separable. However, if the data $\x$ from $\mathcal{D}$ interacts with the injected signal $\bm u$, i.e. non zero projection, then the dataset is still perfectly separable but for a sufficiently larger $\beta$, such that $\bm u^\top \x+\beta > 0$ when $y=+1$ and $\bm u^\top \x+\beta < 0$ when $y=-1$. Because $-\gamma \leq \x^\top \bm u \leq \gamma$ this is achieved for $\beta> \gamma$.
\end{proof}

\begin{proposition}[Robustness]
If $\beta > \gamma$, the linear classifier $h$ is perfectly accurate and robust to adversarial perturbations in an $\ell_2$-ball of radius $\epsilon\leq\beta-\gamma$. Or, equivalently, for $h$ to be $\epsilon$-robust, the injected features must have a strength $\beta\geq \epsilon + \gamma$.
\end{proposition}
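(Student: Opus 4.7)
The proof is a direct margin computation combining the definition of $h$, the interaction bound on $\bm x^\top\bm u$, and Cauchy--Schwarz on the adversarial perturbation. The first step is to observe that robustness of $h$ at a point $\widetilde{\bm x}$ to $\ell_2$-perturbations of radius $\epsilon$ is equivalent to saying that the signed margin $y \cdot \bm u^\top \widetilde{\bm x}$ exceeds the worst-case decrease caused by any admissible $\bm\delta$. Since $h$ is linear in $\widetilde{\bm x}$ with normal $\bm u$ (unit norm), the worst-case decrease is exactly $\|\bm u\|_2\|\bm\delta\|_2 \leq \epsilon$. So the whole argument reduces to bounding $y \cdot \bm u^\top \widetilde{\bm x}$ from below.

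\textbf{Key steps.} First, I would split on the label. For $y=+1$, compute
\begin{equation*}
\bm u^\top \widetilde{\bm x} \;=\; \bm u^\top \bm x + \beta \;\geq\; -\gamma + \beta \;=\; \beta - \gamma,
\end{equation*}
where the inequality uses the lower end of the interaction bound $-\gamma \leq \bm x^\top \bm u$. For $y=-1$, symmetrically,
\begin{equation*}
\bm u^\top \widetilde{\bm x} \;=\; \bm u^\top \bm x - \beta \;\leq\; \gamma - \beta,
\end{equation*}
so $y \cdot \bm u^\top \widetilde{\bm x} \geq \beta - \gamma$ in both cases. Second, for any perturbation with $\|\bm\delta\|_2 \leq \epsilon$, apply Cauchy--Schwarz to obtain $|\bm u^\top\bm\delta|\leq \epsilon$, and therefore
\begin{equation*}
y \cdot \bm u^\top(\widetilde{\bm x}+\bm\delta) \;\geq\; (\beta-\gamma) - \epsilon \;\geq\; 0
\end{equation*}
whenever $\epsilon \leq \beta-\gamma$. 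Taking the sign gives $h(\widetilde{\bm x}+\bm\delta) = y$, which is exactly $\epsilon$-robust classification. The second (equivalent) statement in the proposition then follows by rearranging $\epsilon \leq \beta-\gamma$ to $\beta \geq \epsilon + \gamma$.

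\textbf{Main obstacle.} There is no genuine technical obstacle; the argument is a one-line Cauchy--Schwarz plus the interaction bound. The only mild subtlety is the boundary case $\epsilon = \beta - \gamma$, where the margin can be exactly zero and $\operatorname{sign}(0)$ is not well-defined. I would handle this either by adopting the convention $\operatorname{sign}(0)=+1$ (consistent with $h$'s definition), or by noting that the worst-case $\bm x$ saturating $\bm x^\top \bm u = \pm\gamma$ together with the worst-case $\bm\delta$ aligned with $\mp\bm u$ is a measure-zero configuration and phrasing the conclusion with strict inequality for the boundary; I would pick whichever convention matches how the main text uses the proposition. A sanity check against Proposition~1 is that setting $\epsilon=0$ recovers clean accuracy exactly when $\beta>\gamma$.
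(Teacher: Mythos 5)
Your proof is correct and takes essentially the same route as the paper's: both reduce to the margin computation $|\bm u^\top\widetilde{\bm x}| = |\bm u^\top\bm x + y\beta| \ge \beta-\gamma$ and identify this quantity with the $\ell_2$ distance to the decision hyperplane $\bm u^\top\bm r = 0$ (your Cauchy--Schwarz step is exactly that identification for a unit normal, where the paper instead solves the constrained minimum-distance problem in closed form). If anything your write-up is slightly tidier, since the paper's final display bounds the infimum over the support with the inequality written in the direction opposite to what the claimed conclusion requires.
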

\begin{proof}
 Given $\xt$, we seek to find the minimum distance to the decision boundary of such a classifier. A minimum distance problem can be cast as solving the following optimization problem:
\begin{align*}
    \epsilon^\star(\xt) = \min_{\bm r\in\R^d} \|\bm r - \xt\|_2^2 ~~ \text{subject to} ~ \bm r^\top \bm u = 0,
\end{align*}
which can be solved in closed form 
\begin{align*}
    \epsilon^\star(\xt) = \cfrac{|\bm u^\top\xt|}{\|\bm u\|}=|\bm u^\top \x + y\beta|.
\end{align*}

The robustness radius of the classifier $h$ will therefore be $\epsilon=\inf_{\xt\in\operatorname{supp}(\Db)}\epsilon^\star(\xt)$, which in our case can be bounded by
\begin{align*}
    \epsilon = \inf_{(\xt,y)\in\operatorname{supp}(\Db)}\epsilon^\star(\xt)\leq \min_{|\bm u^\top \x|\leq \gamma, y=\pm 1} |\bm u^\top \x + y\beta|=|\mp\gamma\pm\beta|=\beta-\gamma.
\end{align*}
\end{proof}

Based on these propositions, we can clearly see that the interaction coefficient $\gamma$ reduces the robustness of the additive features $\V$. In this regard, if $\epsilon \geq \beta-\gamma$, robust classification at a radius $\epsilon$ can only be achieved by also leveraging information within $\D$.

\newpage
\section{Robust classification can require non-linear features}\label{ap:amartya}
We now provide a rigorous theoretical example of a learning problem that provably requires additional complex information for robust classification, even though it can achieve good clean performance using only simple features. \looseness=-1
%In particular we  prove a rigorous version of Theorem~\ref{thm:thm_simple} below.%following simplified theorem.\looseness=-1

% \begin{thm}[Simplified]\label{thm:thm_simple}
% For any \(p,k\in\bN\) such that \(k< p\), there exits a family of distributions \(\cD_k\) over \(\reals^{p+1}\) and a concept class \(\cH\) defined over \(\reals^{p+1}\) such that
% \begin{enumerate}
%     \item \(\cH\) is learnable with respect to the clean error using a linear classifier. However, \(\cH\) is not robustly learnable with any linear classifier. 
%     \item There exists an efficient learning algorithm, that robustly learns \(\cH\). However, the algorithm returns a \(k\)-non-linear classifier.
% \end{enumerate}
% where a \(k\)-non linear classifier is one that cannot be represented by a polynomial threshold function of degree less than \(k\).
% \end{thm}

% The aim of such theoretical construction is to demonstrate that the conditions described in Mechanism M\ref{hyp:preference} can indeed provably exist. Namely, that robust classification might require to learn additional non-linear information on top of the one needed to learn the clean solution.

% Let us make this more formal. 
Given some \(p\in\bN\), let \(\reals^{p+1}\) be the input domain. A concept class, defined over \(\reals^{p+1}\) is a set of functions from \(\reals^{p+1}\) to \(\bc{0,1}\). A hypothesis \(h\) is \(s\)-non-linear if the polynomial with the smallest degree that can represent \(h\) has a degree~(largest order polynomial term) of \(s\). %We next define two properties of concepts in \(\cH\):
% \begin{itemize}
%     \item A hypothesis \(h\) is \(s\)-non-linear if the polynomial with the smallest degree that can represent \(h\) has a degree~(largest order polynomial term) of \(s\).
%     \item A hypothesis \(h\) is said to be {\em \(r\)-junta} if it depends on at most \(r\) coordinates of the input space. 
% \end{itemize}

Using these concepts we now state the main result.%can state a more rigorous version of Theorem~\ref{thm:thm_simple}.
\begin{thm}
For any \(p,k\in\bN, \epsilon<0.5\) such that \(k< p\), there exits a family of distributions \(\cD_k\) over \(\reals^{p+1}\) and a concept class \(\cH\) defined over \(\reals^{p+1}\) such that
\begin{enumerate}
    \item \(\cH\) is PAC learnable~(with respect to the clean error) with a {\em linear~(degree 1)} classifier. However, \(\cH\) is not robustly learnable with any linear classifier. 
    \item There exists an efficient learning algorithm, that given a dataset sampled i.i.d. from a distribution \(\cD\in\cD_k\) robustly learns \(\cH\). 
\end{enumerate}
In particular, the algorithm returns a \(k\)-non-linear classifier and in addition, the returned classifier also exploits the linear features used by the linear non-robust classifier.
\end{thm}

\begin{proof}
We now define the construction of the distributions in \(\cD_k\). Every distribution \(\cD\) in the family of distribution \(\cD_k\) is uniquely defined by three parameters: a threshold parameter \(\rho\in\bc{4t\epsilon:t\in\bc{0,\cdots,k}}\)~(one can think of this as the non-robust, easy-to-learn feature), a \(p\) dimensional bit vector \(\bm c\in\bc{0,1}^p\) such that \(\|\bm c\|_1=k\)~(this is the non-linear but robust feature) and \(\epsilon\). Therefore, given \(\rho\) and \(\bm c\)~(and \(\epsilon\) which we discuss when necessary and ignore from the notation for simplicity), we can define the distribution \(\cD^{\bm c,\rho}\). We provide an illustration of this distribution for \(p=2\) in~\Cref{fig:non-linear-robust-dist}.

% For brevity, we eliminate \(\rho\) and \(\bm c\) from the notation of the distribution. 

\paragraph{Sampling the robust non-linear feature} To sample a point~\(\br{\bm{x},y}\in\reals^{p+1}\) from the distribution \(\cD^{\bm c,\rho}\), first, sample a random bit vector \(\hat{\bm x}\in\reals^p\) from the uniform distribution over the boolean hypercube~\(\bc{0,1}^p\). Let \(\hat{y}= \sum_{i=1}^{p-1}\hat{\bm x}[i] \cdot \bm c[i]~\br{\mathrm{mod}~2} \) be the label of the parity function with respect to \(\bm c\) evaluated on \(\hat{\bm x}\). The marginal distribution over \(\yhat\), if sampled this way, is equivalent to the Bernoulli distribution with parameter \(\frac{1}{2}\). To see why, fix all bits in the input except one~(chosen arbitrarily from the variables of the parity function), which is distributed uniformly over \(\bc{0,1}\). It is easy to see that this forces the output of the parity function to be distributed uniformly over \(\bc{0,1}\) as well. Repeating this process for all dichotomies of \(p-1\) variables of the parity function proves the desired result. ~Intuitively, \(\hat{\bm x}\) constitutes the robust non-linear feature of this distribution.\looseness=-1

\paragraph{Sampling the non-robust linear feature} To ensure that \(\hat{\bm x}\) is not perfectly correlated with the true label, we sample the true label \(y\) from a Bernoulli distribution with parameter \(\frac{1}{2}\). Then we sample the non-robust feature \(\bm{x}_1\) as follows
\[\bm{x}_1 \sim \begin{cases}
\mathrm{Unif}\br{X_1^-}&y=0\wedge\yhat=0\\
\mathrm{Unif}\br{X_1^+}&y=1\wedge\yhat=1\\
\mathrm{Unif}\br{X_2^-}&y=0\wedge\yhat=1\\
\mathrm{Unif}\br{X_2^+}&y=1\wedge\yhat=0\end{cases}\] where \[X_1^+=\bs{\rho,\rho+\epsilon}~\text{and}~\quad X_2^+=\bs{\br{\rho+2\epsilon, \rho+3\epsilon}}, X_1^-=\bs{\rho-\epsilon,\rho}~\text{and}~\quad X_2^-=\bs{\br{\rho-3\epsilon, \rho-2\epsilon}}.\] Finally, we return \(\br{\bm{x},y}\) where~\(\bm{x}=\br{\bm{x}_1;\hat{\bm x}}\) is the concatenation of \(\bm{x}_1\) and \(\hat{\bm x}\).

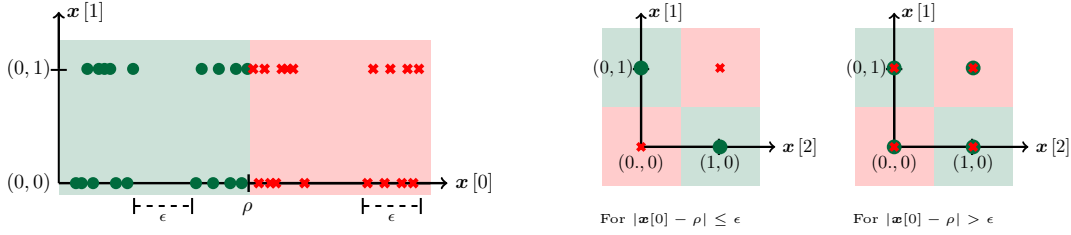
\begin{figure}[t]
\centering
     \begin{subfigure}[b]{0.45\linewidth}
      \scalebox{0.76}{\begin{tikzpicture}
      
        \fill[cadmiumgreen!20] (1.2,-0.2) rectangle (4.55,2.5);
        \fill[red!20] (4.55,-0.2) rectangle (7.7,2.5);
        
        \draw[->,line width=1.2pt] (1.2,0) -- (8,0)
        node[right, black,]{\(\bm x\bs{0}\)};;%\draw[dashed, ->] (8,0) -- (10,0)
        % node[right, orange]{\(W\)}; \filldraw [red] (3,0) circle
        % (2pt);
        % \filldraw [red] (6.5,0) circle (2pt);
        
        %  \draw[|-|,line width=1pt] (1.35,0) node[below, black,] -- (2.5,0) node[above, black,];
        
         \draw[|-|,dashed, line width=1pt] (2.5,-0.4) -- node[below, black,]{\footnotesize \(\epsilon\)} (3.55,-0.4);
         \draw[|-|,dashed, line width=1pt] (6.5,-0.4) -- node[below, black,]{\footnotesize \(\epsilon\)} (7.55,-0.4);
         
        %  \draw[|-|,line width=1pt] (5.5,0) node[below, black,]{\footnotesize \(\rho+\epsilon\)}  -- (6.5,0) ;
        
        \draw[-|,line width=1pt] (1.2,0) node[left, black,]{\(\br{0,0}\)} -- (1.2,2) node[left, black,]{\(\br{0,1}\)};
        
        \draw[->,line width=1.2pt](1.2,2) -- (1.2, 3) node[right, black,]{\(\bm x\bs{1}\)};
        
  \draw[|-, line width=1.2pt] (4.5,0) node[below, black,] at (4.5, -0.2) { \(\rho\)} -- (7.7,0);
         
      \draw (1.5,0 )  node[fill, circle, cadmiumgreen, minimum size=6pt, inner sep=-1pt] {};\draw (1.6,0 )
      node[fill, circle, cadmiumgreen, minimum size=6pt, inner sep=-1pt] {}; \draw (1.7,2 )
      node[fill, circle, cadmiumgreen, minimum size=6pt, inner sep=-1pt] {};\draw (1.8,0 )
      node[fill, circle, cadmiumgreen, minimum size=6pt, inner sep=-1pt] {}; \draw (1.9,2 )
      node[fill, circle, cadmiumgreen, minimum size=6pt, inner sep=-1pt] {};\draw (2.0,2 )
      node[fill, circle, cadmiumgreen, minimum size=6pt, inner sep=-1pt] {};\draw (2.4,0 )
      node[fill, circle, cadmiumgreen, minimum size=6pt, inner sep=-1pt] {};  \draw (2.1,2 )
      node[fill, circle, cadmiumgreen, minimum size=6pt, inner sep=-1pt] {};\draw (2.2,0 )
      node[fill, circle, cadmiumgreen, minimum size=6pt, inner sep=-1pt] {}; \draw (2.5,2 )
      node[fill, circle, cadmiumgreen, minimum size=6pt, inner sep=-1pt] {}; \draw (3.6,0 )
      node[fill, circle, cadmiumgreen, minimum size=6pt, inner sep=-1pt] {}; \draw (3.7,2 )
      node[fill, circle, cadmiumgreen, minimum size=6pt, inner sep=-1pt] {}; \draw (3.9,0 )
      node[fill, circle, cadmiumgreen, minimum size=6pt, inner sep=-1pt] {}; \draw (4,2 )
      node[fill, circle, cadmiumgreen, minimum size=6pt, inner sep=-1pt] {}; \draw (4.2,0 )
      node[fill, circle, cadmiumgreen, minimum size=6pt, inner sep=-1pt] {}; \draw (4.3,2 )
      node[fill, circle, cadmiumgreen, minimum size=6pt, inner sep=-1pt] {}; \draw (4.4,0 )
      node[fill, circle, cadmiumgreen, minimum size=6pt, inner sep=-1pt] {}; \draw (4.5,2 )
      node[fill, circle, cadmiumgreen, minimum size=6pt, inner sep=-1pt] {};  \draw (1.6+3,2 )  
      node[fill, cross, red, minimum size=4pt, line width=2pt, inner sep=-1pt] {}; \draw (1.7+3,0 )
      node[fill, cross, red, minimum size=4pt, line width=2pt, inner sep=-1pt] {}; \draw (1.8+3,2 )
      node[fill, cross, red, minimum size=4pt, line width=2pt, inner sep=-1pt] {};\draw (1.9+3,0 )
      node[fill, cross, red, minimum size=4pt, line width=2pt, inner sep=-1pt] {};\draw (2.0+3,0 )
      node[fill, cross, red, minimum size=4pt, line width=2pt, inner sep=-1pt] {};\draw (2.1+3,2 )
      node[fill, cross, red, minimum size=4pt, line width=2pt, inner sep=-1pt] {}; \draw (2.2+3,2 )
      node[fill, cross, red, minimum size=4pt, line width=2pt, inner sep=-1pt] {};\draw (2.3+3,2 )
      node[fill, cross, red, minimum size=4pt, line width=2pt, inner sep=-1pt] {}; \draw (2.5+3,0 )
      node[fill, cross, red, minimum size=4pt, line width=2pt, inner sep=-1pt] {}; \draw (3.6+3,0 )
      node[fill, cross, red, minimum size=4pt, line width=2pt, inner sep=-1pt] {}; \draw (3.7+3,2 )
      node[fill, cross, red, minimum size=4pt, line width=2pt, inner sep=-1pt] {}; \draw (3.9+3,0 )
      node[fill, cross, red, minimum size=4pt, line width=2pt, inner sep=-1pt] {}; \draw (4+3,2 )
      node[fill, cross, red, minimum size=4pt, line width=2pt, inner sep=-1pt] {}; \draw (4.2+3,0 )
      node[fill, cross, red, minimum size=4pt, line width=2pt, inner sep=-1pt] {}; \draw (4.3+3,2 )
      node[fill, cross, red, minimum size=4pt, line width=2pt, inner sep=-1pt] {}; \draw (4.4+3,0 )
      node[fill, cross, red, minimum size=4pt, line width=2pt, inner sep=-1pt] {}; \draw (4.5+3,2 )
      node[fill, cross, red, minimum size=4pt, line width=2pt, inner sep=-1pt] {}; 
      
      % \draw (4.2+2.4+2.4,0 ) node[cross,red,line width=1pt] {};
    \end{tikzpicture}}%\subcaption*{1st and 2nd dim}
    \end{subfigure}
    \hspace{0.1in}\begin{subfigure}[b]{0.52\linewidth}
     \begin{subfigure}[b]{0.26\linewidth}
    %  \subcaption*{\tiny \(\abs{\bm{x}[0]-\rho}\leq \epsilon\)}
     \scalebox{0.7}{\begin{tikzpicture}
      
        \fill[red!20] (0,-1.5) rectangle (1.5,0);
        \fill[red!20] (1.5,0) rectangle (3,1.5);
        \fill[cadmiumgreen!20] (0,1.5) rectangle (1.5,0);
        \fill[cadmiumgreen!20] (1.5,0) rectangle (3,-1.5);
    
      \draw[->, line width=1.2pt] (0.75,-0.75) -- (3.3,-0.75)
        node[right, black,]{\(\bm x\bs{2}\)};;%\draw[dashed, ->] (8,0) -- (10,0)
        
        \draw[->, line width=1.2pt] (0.75,-0.75) -- (0.75, 1.8)
        node[right, black,]{\(\bm x\bs{1}\)};

        \draw[-|,line width=1.2pt] (0.75, -0.75) -- (0.75,0.75) node[left, black,]{\(\br{0,1}\)};
        
        \draw[-|,line width=1.2pt] (0.75, -0.75) node[below, black,]{\(\br{0.,0}\)} -- (2.25,-0.75) node[below, black,]{\(\br{1,0}\)};
        
      \draw (0.75,0.75 )  node[fill, circle, cadmiumgreen, minimum size=8pt, inner sep=-1pt] {};\draw (2.25,-0.75 )
      node[fill,circle, cadmiumgreen, minimum size=8pt, inner sep=-1pt] {};  \draw (2.25,0.75 )  
      node[fill, cross, red, minimum size=4pt, line width=2pt, inner sep=-1pt] {}; \draw (0.75,-0.75 )
      node[fill, cross, red, minimum size=4pt, line width=2pt, inner sep=-1pt] {};        
    \end{tikzpicture}}
    \subcaption*{\tiny For \(\abs{\bm{x}[0]-\rho}\leq \epsilon\)}
    \end{subfigure}\hspace{0.4in}
    \begin{subfigure}[b]{0.26\linewidth}
    \scalebox{0.7}{\begin{tikzpicture}
      
        \fill[red!20] (0,-1.5) rectangle (1.5,0);
        \fill[red!20] (1.5,0) rectangle (3,1.5);
        \fill[cadmiumgreen!20] (0,1.5) rectangle (1.5,0);
        \fill[cadmiumgreen!20] (1.5,0) rectangle (3,-1.5);
    
      \draw[->, line width=1.2pt] (0.75,-0.75) -- (3.3,-0.75)
        node[right, black,]{\(\bm x\bs{2}\)};;%\draw[dashed, ->] (8,0) -- (10,0)
        
        \draw[->, line width=1.2pt] (0.75,-0.75) -- (0.75, 1.8)
        node[right, black,]{\(\bm x\bs{1}\)}; 
        
        \draw[-|,line width=1.2pt] (0.75, -0.75) -- (0.75,0.75) node[left, black,]{\(\br{0,1}\)};
        
        \draw[-|,line width=1.2pt] (0.75, -0.75) node[below, black,]{\(\br{0.,0}\)} -- (2.25,-0.75) node[below, black,]{\(\br{1,0}\)};
        
          \draw (0.75,0.75 )  
      node[fill, circle, cadmiumgreen, minimum size=8pt, inner sep=-1pt] {};\draw (2.25,-0.75 )
      node[fill,circle, cadmiumgreen, minimum size=8pt, inner sep=-1pt] {}; \draw (2.25,0.75 )  
      node[fill, circle, cadmiumgreen, minimum size=8pt, inner sep=-1pt] {};\draw (0.75,-0.75 )
      node[fill,circle, cadmiumgreen, minimum size=8pt, inner sep=-1pt] {};  \draw (2.25,0.75 )  
      node[fill, cross, red, minimum size=4pt, line width=2pt, inner sep=-1pt] {}; \draw (0.75,-0.75 )
      node[fill, cross, red, minimum size=4pt, line width=2pt, inner sep=-1pt] {};  \draw (0.75,0.75 )  
      node[fill, cross, red, minimum size=4pt, line width=2pt, inner sep=-1pt] {}; \draw (2.25,-0.75  )
      node[fill, cross, red, minimum size=4pt, line width=2pt, inner sep=-1pt] {};       
    \end{tikzpicture}}
    \subcaption*{\tiny For \(\abs{\bm{x}[0]-\rho}> \epsilon\)}
    \end{subfigure}
    % \subcaption*{2nd and 3rd dim}
    \end{subfigure}
          \caption{Illustration of one possible distribution $\mathcal{D}^{\bm c, \rho}$ in three dimensions. The data is linearly separable in the direction $\bm x[0]$ but has a very smalll margin in that direction. Leveraging $\bm x[1]$ and $\bm x[2]$ additionally, we see how the data can indeed be separated robustly, albeit non-linearly.} 
          \label{fig:non-linear-robust-dist}\vspace{-10pt}
        \end{figure}

\paragraph{Linear non-robust solution} First, we show that there is a linear, accurate, but non-robust solution to this problem. To obtain this solution, sample an \(m\)-sized dataset \(S_m=\bc{\br{\bm{x}_1,y_1},\ldots,\br{\bm{x}_m,y_m}}\in\reals^{p+1}\times\bc{-1,1}\) from the distribution \(\cD^{c,\rho}\). Ignore all, except the first coordinate, of the covariates of the dataset to create \(S_m^0 = \bc{\br{\bm{x}_1\bs{0},y_1}\dots\br{\bm{x}_m\bs{0},y_m}}\) where \(S_i\bs{j}\) indexes the \(j^{\it th}\) coordinate of the \(i^{\it th}\) element of the dataset.  Then, sort \(S_m^0\) on the basis of the covariates~(\ie the first coordinate). Let \(\hat{\rho}\) be the largest element whose label is \(0\). 

Define \(f_{\mathrm{lin},\hat{\rho}}\) as the linear threshold function on the first coordinate i.e. \(f_{\mathrm{lin},\hat{\rho}}\br{\bm{x}} = \bI\bc{\bm{x}[0]\geq \hat{\rho}}\). By construction,  \(f_{\mathrm{lin},\hat{\rho}}\) accurate classifies all points in \(S_m\). The VC dimension of a linear threshold function in \(1\) dimension is \(2\). Then, using standard VC sample complexity upper bounds~\footnote{\url{https://www.cs.ox.ac.uk/people/varun.kanade/teaching/CLT-MT2018/lectures/lecture03.pdf}} for consistent learners, if \(m\geq\kappa_0\br{\frac{1}{\alpha}\log\br{\frac{1}{\beta}}+\frac{1}{\alpha}\log{\br{\frac{1}{\alpha}}}}\), where \(\kappa_0\) is some universal constant,  we have that with probability at least~\(1-\beta\),\[\err{f_{\mathrm{lin},\hat{\rho}}}{\cD^{c,\rho}}\leq \alpha.\]

\paragraph{Non-linear robust solution} 
Next, we propose an algorithm to find a robust solution and show that this solution has a non-linearity of degree \(k\). First, sample the \(m\)-sized dataset \(S_m\) and use the method described above to find \(\hat{\rho}\). Then, create a modified dataset \(\widehat{S}\) by first removing all points \(\bm{x}\) from \(S_m\)  such that  \(\abs{\bm{x}\bs{0} - \hat{\rho}}\geq \frac{\epsilon}{8}\) and then removing the first coordinate of the remaining points. Thus, each element in \(\widehat{S}\) belongs to \(\reals^p\times\bc{0,1}\) dimensional dataset. 

Note, that by construction, there is a consistent~(\ie an accurate) parity classifier on \(\widehat{S}\). Let the parity bit vector consistent with \(\widehat{S}\) be  \(\hat{\bm{c}}\). This can be learned using Gaussian elimination. Consequently, construct the parity classifier \(f_{\mathrm{par},\widehat{c}} = \sum_{i=0}^{p-1}\widehat{\bm{x}}\bs{i}\cdot\widehat{\bm c}\bs{i}~\br{\mathrm{mod}~2}\). 
% \guille{Shouldn't it be \(f_{\mathrm{par},\widehat{c}} = \sum_{i=1}^p\bm{x}[i]\bm{c}[i]~\br{\mathrm{mod}~2}\) to be consistent in notation.}

Finally, the algorithm returns the classifier \(g_{\hat{\rho},\hat{\bm{c}}}\), which acts as follows:
\begin{equation}\label{eq:defn-robust-class}g_{\hat{\rho},\hat{\bm{c}}}\br{\bm{x}} = \begin{cases}1&\bI\bc{\bm{x}[0]\geq \hat{\rho}+\epsilon + \frac{\epsilon}{8}}\\
0&\bI\bc{\bm{x}[0]\leq\hat{\rho} -\epsilon - \frac{\epsilon}{8}}\\
f_{\mathrm{par},\widehat{c}}\br{\widetilde{\bm{x}}}&~\text{o.w.}
\end{cases} \end{equation}
where \(\widetilde{\bm{x}}=\mathrm{round}\br{\bm{x}\bs{1,\dots,p}}\) is obtained by rounding off \(\bm{x}\) starting from the second index till the last. For example, if \(\bm{x} = \bs{0.99, 0.4, 0.9, 0.4, 0.8}\), \(\epsilon=0.2\),  and \(\bm{c}=\bs{0,0,1,1}\) then \(\widetilde{\bm{x}} = \bs{0, 1, 0, 1}\) and \(g_{0.5,\hat{\bm{c}}}\bs{\widetilde{x}}=1\). Finally, it is easy to verify that the classifier \(g_{\hat{\rho},\hat{\bm{c}}}\) is accurate on all training points and as the number of total parity classifiers is less than \(2^p\)~(hence finite VC dimension), as long as \(m\geq\kappa_1\br{\frac{1}{\alpha}\log\br{\frac{1}{\beta}}+\frac{p}{\alpha}\log{\br{\frac{1}{\alpha}}}}\), where \(\kappa_1\) is some universal constant,  we have that with probability at least~\(1-\beta\),\[\err{g_{\hat{\rho},\hat{c}}}{\cD^{c,\rho}}\leq \alpha.\] 
% {\color{red} Is the \(\frac{\epsilon}{8}\) necessary ?}

\paragraph{Robustness of \(g_{\hat{\rho},\hat{\bm{c}}}\)} As \(\bm{x}\bs{0}\) is distributed uniformly in the intervals \(\bs{\rho-\epsilon, \rho}\cup\bs{\rho, \rho+\epsilon}\), we have that \(\abs{\rho-\hat{\rho}}\leq 4\epsilon\cdot\err{f_{\mathrm{lin},\hat{\rho}}}{\cD^{c,\rho}}\leq 4\epsilon\alpha\). Therefore, when \(m\) is large enough~\(\br{m=\mathrm{poly}\br{\frac{1}{\alpha}}}\) such that \(\alpha\leq \frac{1}{32}\), we have that \(\abs{\hat{\rho}-\rho}\leq\frac{\epsilon}{8}\). Intuitively, this guarantees that \(g_{\hat{\rho},\hat{\bm{c}}}\) uses the linear threshold function on \(\bm{x}\bs{0}\) for classification in the interval \(\bs{\rho,\rho+\epsilon}\cup\bs{\rho,\rho-\epsilon}\) and \(f_{\mathrm{par},\widehat{c}}\) in the \(\bs{\rho+2\epsilon,\rho+3\epsilon}\cup\bs{\rho-2\epsilon,\rho-3\epsilon}\). A crucial property of \(g_{\hat{\rho},\hat{c}}\) is that for all \(x\in\mathrm{Supp}\br{\cD^{c,\rho}}\), the classifier \(g_{\hat{\rho},\hat{c}}\) does not alter its prediction in an \(\ell_\infty\)-ball of radius \(\epsilon\). We show this by studying four separate cases.  First, we prove robustness along all coordinates except the first. \begin{enumerate}
    \item When  \(\abs{\bm{x}[0]- \hat{\rho}} \geq \epsilon + \frac{\epsilon}{8}\), as shown above, \(g_{\hat{\rho},\hat{c}}\) is invariant to all \(\bm x\bs{i}\) for all \(i>0\) and is thus robust against all \(\ell_\infty\) perturbations against those coordinates.
    \item When \(\abs{\bm{x}[0]- \hat{\rho}} < \epsilon + \frac{\epsilon}{8}\), due to~\Cref{eq:defn-robust-class}, we have that \(g_{\hat{\rho},\hat{c}}=f_{\mathrm{par},\widehat{c}}\br{\widetilde{\bm{x}}}\) where \(\widetilde{\bm{x}}=\mathrm{round}\br{\bm{x}\bs{1,\dots,p}}\) is obtained by rounding off all indices of  \(\bm{x}\) except the first. As the rounding operation on the boolean hypercube is robust to any \(\ell_\infty\) perturbation of radius less than \(0.5\), we have that \(g_{\hat{\rho},\hat{c}}\) is robust to all \(\ell_\infty\) perturbations of radius less than \(0.5\) on the support of the distribution \(\cD^{c,\rho}\).
    \end{enumerate}
    Next, we prove the robustness along the first coordinate.  Let \(0<\delta<\epsilon\) represent an adversarial perturbation. Without loss of generality, assume that \(\bm x\bs{0}>\hat{\rho}\) as similar arguments apply for the other case.
    \begin{enumerate}
    \item Consider the case  \(\bm{x}[0] \leq \hat{\rho} + \epsilon + \frac{\epsilon}{8}\). Then, \(\abs{\bm{x}\bs{0}-\delta-\hat{\rho}}\leq \abs{ \epsilon +\frac{\epsilon}{8}- \delta}\leq\epsilon + \frac{\epsilon}{8} \) and hence, by construction, \(g_{\hat{\rho},\hat{c}}\br{\bm{x}} = g_{\hat{\rho},\hat{c}}\br{\bs{x\bs{0}-\delta; \bs{x}\bs{1,\ldots,p}}}\). On the other hand, for all \(\delta\), we have that \(g_{\hat{\rho},\hat{c}}\br{\bs{x\bs{0}+\delta; \bs{x}\bs{1,\ldots,p}}} =1\) if \(g_{\hat{\rho},\hat{c}}\br{x} = 1\). 
    \item For the case \(\bm{x}[0] \geq \hat{\rho} + \epsilon + \frac{\epsilon}{8}\), the distribution is supported only on the interval \(\bs{\rho+2\epsilon, \rho+3\epsilon}\). When a positive \(\delta\) is added to the first coordinate, the classifier's prediction does not change and it remains \(1\). For all \(\delta\leq \frac{\epsilon}{2}\), when the perturbation is subtracted from the first coordinate, its first coordinate is still larger than \(\hat{\rho}+\epsilon+\frac{\epsilon}{8}\) and hence, the prediction is still \(1\). 
\end{enumerate}
This completes the proof of robustness of \(g_{\hat{\rho},\hat{c}}\) along all dimensions to \(\ell_\infty\) perturbations of radius less than \(\epsilon\). Combining this with its error bound, we have that \(\adv{g_{\hat{\rho},\hat{c}}}{\cD^{c,\rho}}{\epsilon}\leq \alpha \).
% \guille{\(\abs{x-\delta}\leq \epsilon + \frac{\epsilon}{8} \) should probably be \(\abs{\bm x[i]-\delta}\leq \epsilon + \frac{\epsilon}{8} \) for $i=1,\dots,p$ right? Otherwise $\delta$ must be a vector satisfying $\bm 0\preceq\bm\delta\preceq0.5\cdot\bm 1$.}

% \guille{Also, I think to be precise \(\widetilde{\bm{x}}=\mathrm{round}\br{\bm{x}\bs{1,\dots,p+1}}\) must be \(\widetilde{\bm{x}}=\mathrm{round}\br{\bm{x}\bs{1,\dots,p}}\) and \(f_{\mathrm{par},\widehat{c}} = \sum_{i=1}^{p}\bm{x}\bs{i}\cdot\widehat{\bm c}\bs{i}~\br{\mathrm{mod}~2}\) should be \(f_{\mathrm{par},\widehat{c}} = \sum_{i=0}^{p-1}\widehat{\bm{x}}\bs{i}\cdot\widehat{\bm c}\bs{i}~\br{\mathrm{mod}~2}\) to match the dimensionality of $\widehat{\bm c}$. I already changed these.}

To show that the parity function is non-linear, we use a classical result from~\citet{aspnes1994expressive}. Theorem 2.2 in~\citet{aspnes1994expressive} shows that approximating the parity function in \(k\) bits using a polynomial of degree \(\ell\) incurs at least \(\sum_{i=0}^{k_\ell} {k\choose i}\) where \(k_\ell=\floor{\frac{k-\ell-1}{2}}\) mistakes. Therefore, the lowest degree polynomial that can do the approximation accurately is at least \(k\).

This completes our proof of showing that the robust classifier is of non-linear degree \(k\) while the accurate classifier is linear. Next, we prove that no linear classifier can be robust. We show this by contradiction.

\paragraph{No linear classifier can be robust} Construct a set \(\cZ\) of \(s\)~(to be defined later) points in \(\reals^{p+1}\) by sampling the first coordinate from the interval \(\bs{\rho, \rho+\epsilon}\) and the remaining \(p\) coordinates uniformly from the boolean hypercube. Then, augment the set by subtracting \(\epsilon\) from the first coordinate while retaining the rest of the coordinates.
% This set \(\cZ\), therefore, can be partitioned into a set of couples of points such that both points are equidistant from \(\rho\) in their first coordinate but one is greater than \(\rho\) and the other smaller. In the rest of the coordinates, the two points are identical.
Note that this set can be generated, along with its labels, by sampling  enough points from the original distribution and discarding points that do not fall in this interval. Now construct adversarial examples of each point in the augmented set by either adding or subtracting \(\epsilon\) from the negatively and the positively labelled examples respectively and augment the original set with these adversarial points. For a large enough \(s\),\footnote{There is a slight technicality as we might not obtain points that are exact reflections of each other around \(\rho\) but that can be overcome by discretising upto a certain precision}, this augmented set of points can be decomposed into a multiset of points, where all points in any one set has the same value in the first coordinate but nearly half of their label is zero and the other half one. 

Now, assume that there is a linear classifier that has a low error on the distribution \(\mathcal{D}^{c,\rho}\). Therefore the classifier is also accurate on these sets of points as the classifier is robust, by assumption, and the union of these sets occupy a significant under the distribution \(\mathcal{D}^{c,\rho}\). However, as the first coordinate of every point within a set is constant despite half the points having label one and the other half zero, the coefficient of the linear classifier can be set to zero without altering the behavior of the classifier. Then, effectively the linear classifier is representing a parity function on the rest of the \(p\) coordinates. However, we have just seen that this is not possible as a linear threshold function cannot represent a parity function on \(k\) bits where \(k>1\). This contradicts our initial assumption that there is a robust linear classifier for this problem.

% restriction of the distribution on the intervals~\(\bs{\rho-\epsilon, \rho}\cup\bs{\rho,\rho+\epsilon}\) and denote it with \(\widetilde{\cD^{c,\rho}}\). Let \(\widetilde{S}_m = \bc{\br{x_1,y_1},\dots,\br{x_m,y_m}}\) be an \(m\)-sized dataset from this distribution. Note that this is easy to  Now, augment this dataset to create a dataset  \(\widetilde{S}^\prime_m = S_m\cup\bc{\br{x_1-y_1\epsilon,y_1},\dots,\br{x_m-y_m\epsilon,y_m}}\). Intuitively, this dataset contains the original point and its worst adversarial sample. 
% Things to show but are not too difficult:
% \begin{itemize}
%     % \item \(g\) is adversarially robust to \(\ell_\infty\) perturbations of radius \(\epsilon\) i.e. \(\adv{g}{\cD}{\epsilon}\leq \alpha\) with probability at least \(1-\beta\). To show this look at the two pieces of \(g\). One piece~(around \(\hat{\rho}\)) is robust due to the \(\mathrm{round}\) operation. The other piece is robust because the decision boundary is too far away and it only depends on the first coordinate in that region.
%     % \item No linear classifier can be robust.
%     \item .
% \end{itemize}

This completes the proof.
\end{proof}
\clearpage
% \begin{lem}[Parity is balanced]\label{lem:parity-balanced}
% For any \(\bm{c}\in\bc{0,1}^p\), we have that \[\bP_{\bm{x}}\bs{ f_{\mathrm{par},c}\br{\bm{x}}= 1}=\frac{1}{2} \] where \(f_{\mathrm{par},c}\br{\bm{x}} = \sum_{i=1}^p\bm{x}_i\bm{c}_i~\br{\mathrm{mod}~2}\) is the parity function defined by the bit vector \(\bm{c}\) and \(\bm{x}\) is sampled from the uniform distribution over the boolean hypercube.
% \end{lem}

% \begin{lem}[PAC learn thresholds]\label{lem:pac-thresholds}
% Use simple VC or divide the real line into multiple intervals and show that every interval has a point with high probability.
% \end{lem}
% \bibliographystyle{natbib}
% \bibliography{main}
% \end{document}

\section{Experimental details} \label{ap:experimental_settings}
In this section we provide the experimental details for all results presented in the paper. Adversarial training for all methods and datasets follows the fast training schedules with a cyclic learning rate introduced in \cite{RS-FGSM}. We train for 30 epochs on CIFAR  \cite{cifar} and $15$ epochs for SVHN \cite{svhn} following \cite{grad_align}. When we perform PGD-AT we use 10 steps and a step size $\alpha=\nicefrac{2}{255}$; FGSM uses a step size of $\alpha=\epsilon$. Regularization parameters for GradAlign \cite{grad_align} and N-FGSM \cite{nfgsm} will vary and are stated when relevant in the paper. The architecture employed is a PreactResNet18 \cite{preactresnet}. Robust accuracy is evaluated by attacking the trained models with PGD-50-10, i.e. PGD with 50 iterations and 10 restarts. In this case we also employ a step size $\alpha = \nicefrac{2}{255}$ as in \cite{RS-FGSM}. All accuracies are averaged after training and evaluating with 3 random seeds. 

The curvature computation is performed following the procedure described in \citet{moosavi-dezfooliRobustnessCurvatureRegularization2019a}. As they propose, we use finite differences to estimate the directional second derivative of the loss with respect to the input, \ie
\begin{equation*}
    \bm w^\top \nabla_{\x}^2 \mathcal{L}(f_{\bm\theta}(\x), y)\approx\cfrac{\nabla_{\x} \mathcal{L}(f_{\bm\theta}(\x+t\bm w), y)-\nabla_{\x} \mathcal{L}(f_{\bm\theta}(\x-t\bm w), y)}{2t},
\end{equation*}
with $t>0$ and use the Lanczos algorithm to perform a partial eigendecomposition of the Hessian without the need to compute the full matrix. In particular, we pick $t=0.1$.

All our experiments were performed using a cluster equipped with GPUs of various architectures. The estimated compute budget required to produce all results in this work is around $2,000$ GPU hours (in terms of NVIDIA V100 GPUs).

\section{Inducing catastrophic overfitting with other settings} \label{ap:beta-epsilons}
In \cref{sec:inducing_CO} we have shown that CO can be induced with data interventions for CIFAR-10 and $\ell_\infty$ perturbations. Here we present similar results when using other datasets (i.e. CIFAR-100 and SVHN) and other types of perturbations (i.e. $\ell_2$ attacks). Moreover, we also report results when the injected features $\bm v(y)$ follow random directions (as opposed to low-frequency DCT components). Overall, we find similar findings to those reported the main text.

\subsection{Other datasets} 
Similarly to \cref{sec:inducing_CO} we modify the SVHN, CIFAR-100 and higher resolution Imagenet-100 and TinyImagenet datasets to inject highly discriminative features $\bm v(y)$. Since SVHN also has 10 classes, we use the exact same settings as in CIFAR-10 and we train and evaluate with $\epsilon=4$ where training on the original data does not lead to CO (recall $\beta=0$ corresponds to the unmodified dataset). On the other hand, for CIFAR-100 and Imagenet-100 we select $\bm v(y)$ to be the 100 DCT components with the lowest frequency and we present results with $\epsilon=5$ and $\epsilon=4$ respectively. Similarly, for TinyImagenet (which has 200 classes) we use the first 200 DCT components and present results with $\epsilon=6$. On ImageNet-100 we evaluate robustness with AutoAttack \cite{croce2020reliable}, as PGD-50 seems to be broken when CO occurs in this dataset. On the other datasets, we evaluated with AutoAttack at the extrema of our plots and found that the estimated robustness agrees with the one of PGD-50. To save computational complexity, we only used PGD-50 to estimate the robustness on the other datasets.

Regarding the training settings, for CIFAR10/100 and SVHN datasets we use the same settings as \citet{grad_align}. For ImageNet-100 we follow \citet{pmlr-v180-kireev22a} and for TinyImageNet \citet{towards}. 

In all datasets we can observe similar trends as with CIFAR-10: For small values of $\beta$ the injected features are not very discriminative due to their interaction with the dataset images and the model largely ignores them. As we increase $\beta$, there is a range in which they become more discriminative but not yet robust and we observe CO. Finally for large values of $\beta$ the injected features become robust and the models can achieve very good performance focusing only on those.

\begin{figure}[ht!]
    \centering
    \includegraphics[width=\textwidth]{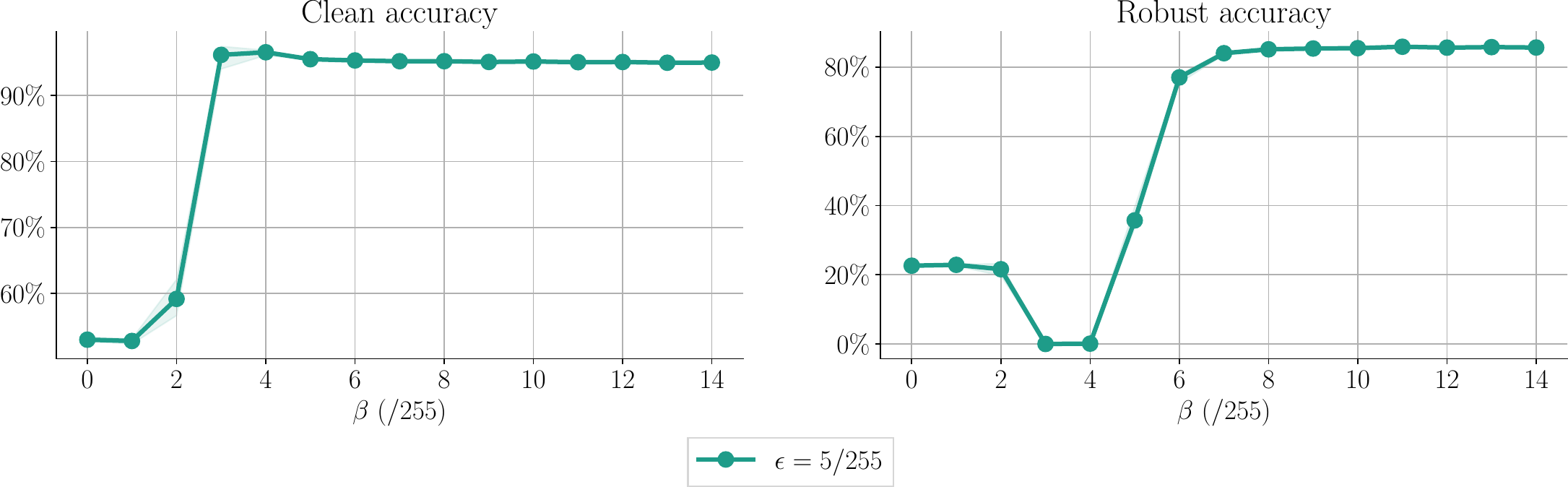}
    \caption{Clean and robust performance after FGSM-AT on injected datasets $\Db$ constructed from CIFAR-100. As FGSM-AT already suffers CO on CIFAR-100 at $\epsilon=\nicefrac{6}{255}$ we use $\epsilon=\nicefrac{5}{255}$ in this experiment where FGSM-AT does not suffer from CO as seen for $\beta=0$. In this setting, we observe CO happening when $\beta$ is slightly smaller than $\epsilon$. Results are averaged over $3$ seeds and shaded areas report minimum and maximum values.}
    \label{fig:beta-epsilon-cifar100}
\end{figure}

\begin{figure}[ht!]
    \centering
    \includegraphics[width=\textwidth]{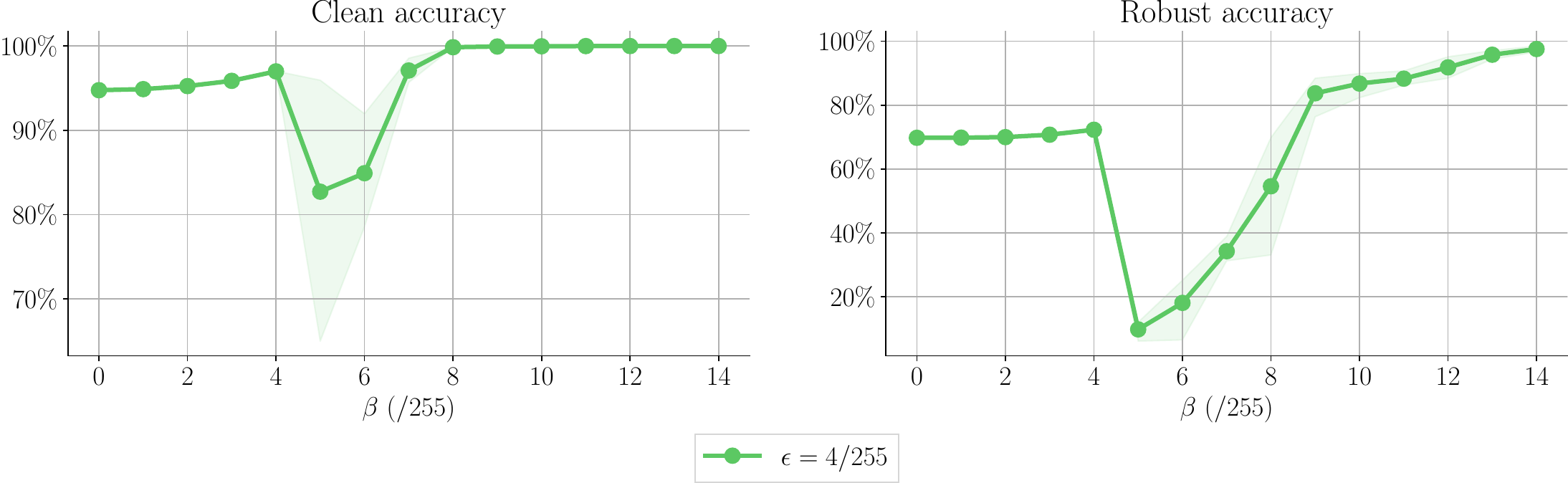}
    \caption{Clean and robust performance after FGSM-AT on injected datasets $\Db$ constructed from SVHN. As FGSM-AT already suffers CO on SVHN at $\epsilon=\nicefrac{6}{255}$ we use $\epsilon=\nicefrac{4}{255}$ in this experiment where FGSM-AT does not suffer from CO as seen for $\beta=0$. In this setting, we observe CO happening when $\beta\approx\epsilon$. Results are averaged over $3$ seeds and shaded areas report minimum and maximum values.}
    \label{fig:beta-epsilon-svhn}
\end{figure}

\begin{figure}[ht!]
    \centering
    \includegraphics[width=\textwidth]{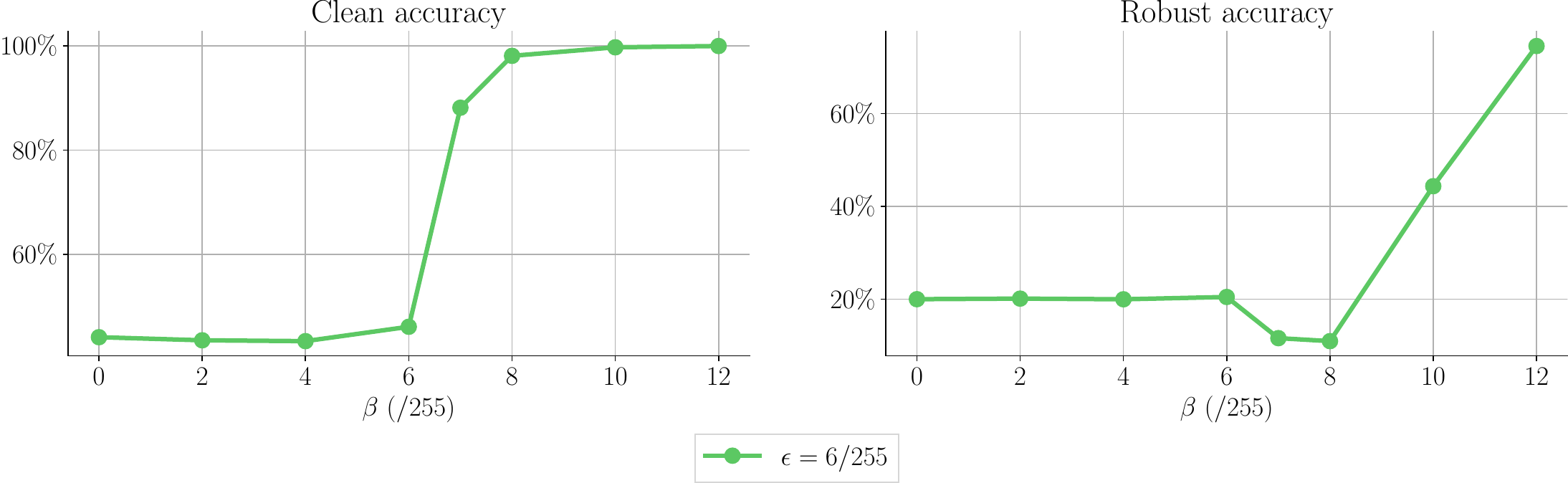}
    \caption{Clean and robust performance after FGSM-AT on injected datasets $\Db$ constructed from TinyImageNet. We use $\epsilon=\nicefrac{6}{255}$ in this experiment where FGSM-AT does not suffer from CO as seen for $\beta=0$. In this setting, we observe CO happening for $\beta\in[\nicefrac{7}{255}, \nicefrac{8}{255}]$.}
    \label{fig:beta-epsilon-tiny}
\end{figure}

\subsection{Other norms}
Catastrophic overfitting has been mainly studied for $\ell_\infty$ perturbations and thus we presented experiments with $\ell_\infty$ attacks following related work. However, in this section we also present results where we induce CO with $\ell_2$ perturbation which are also widely used in adversarial robustness. In~\cref{fig:beta-epsilon-l2} we show the clean (left) and robust (right) accuracy after FGM-AT\footnote{FGM is the $\ell_2$ version of FGSM where we do not take the sign of the gradient.} on our injected dataset from CIFAR-10 ($\Db$). Similarly to our results with $\ell_\infty$ attacks, we also observe CO as the injected features become more discriminative (increased $\beta$). It is worth mentioning that the $\ell_2$ norm we use ($\epsilon = 1.5$) is noticeably larger than typically used in the literature, however, it would roughly match the magnitude of an $\ell_\infty$ perturbation with $\epsilon=7/255$. Interestingly, we did not observe CO for this range of $\beta$ with $\epsilon = 1$.
\begin{figure}[ht!]
    \centering
    \includegraphics[width=\textwidth]{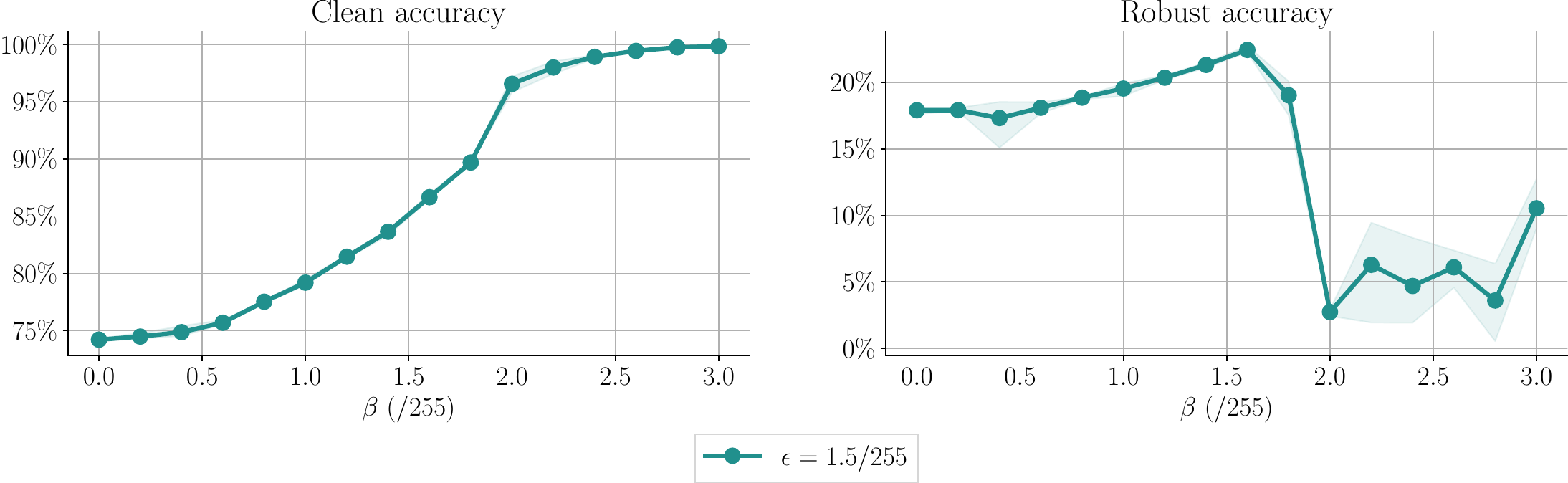}
    \caption{Clean and $\ell_2$ robust performance after FGSM-AT on injected datasets $\Db$ constructed from CIFAR-10. FGM-AT suffers CO on CIFAR-10 around $\epsilon=2$, so we use $\epsilon=1.5$ in this experiment where FGM-AT does not suffer from CO as seen for $\beta=0$. In this setting, we observe CO happening when $\beta\approx\epsilon$. Results are averaged over $3$ seeds and shaded areas report minimum and maximum values.}
    \label{fig:beta-epsilon-l2}
\end{figure}

\subsection{Other injected features}  
We selected the injected features for our injected dataset from the low frequency components of the DCT to ensure an interaction with the features present on natural images \cite{dct}. However, this does not mean that other types of features could not induce CO. In order to understand how unique was our choice of features we also created another family of injected datasets but this time using a set of 10 randomly generated vectors as features. As in the main text, we take the sign of each random vector to ensure they take values in $\{-1, +1\}$ and assign one vector per class. In \cref{fig:beta-epsilon-rand} we observe that using random vectors as injected features we can also induce CO. Note that since our results are averaged over 3 random seeds, each seed uses a different set of random vectors.

\begin{figure}[ht!]
    \centering
    \includegraphics[width=\textwidth]{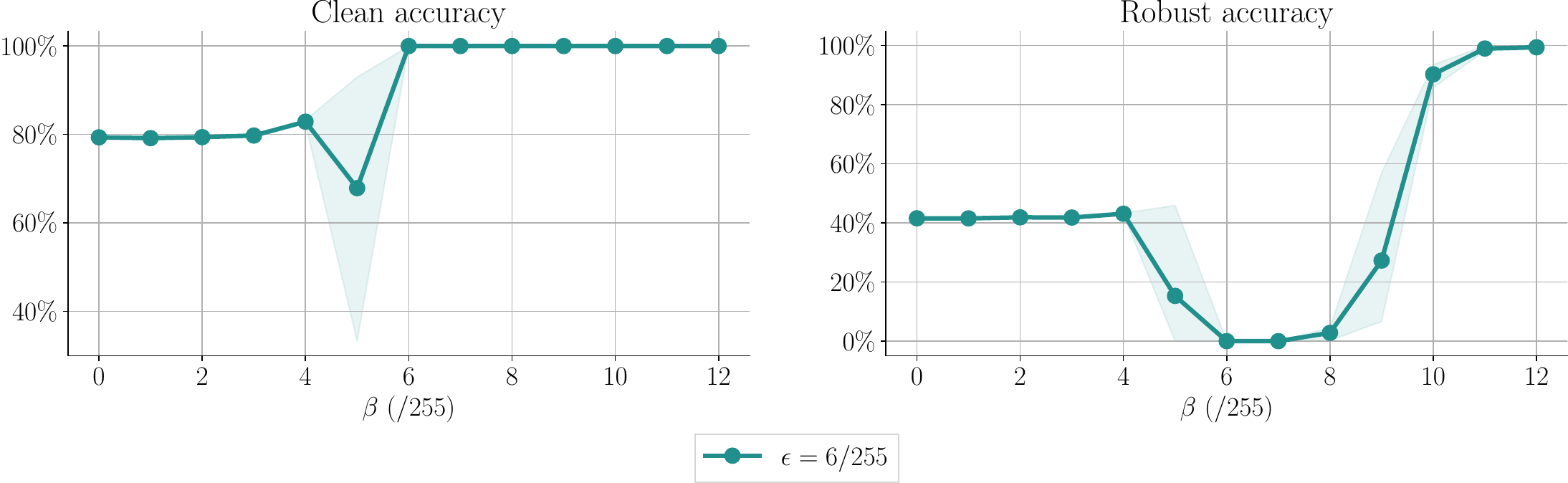}
    \caption{Clean and robust performance after FGSM-AT on injected datasets $\Db$ constructed from CIFAR-10 using random signals in $\V$. We perform this experiments at $\epsilon=\nicefrac{6}{255}$ where we saw that injected the dataset with the DCT basis vectors did induce CO. In the random $\V$ setting, we observe the same behaviour, with CO happening when $\beta\approx\epsilon$. Results are averaged over $3$ seeds and shaded areas report minimum and maximum values.}
    \label{fig:beta-epsilon-rand}
\end{figure}

\subsection{Other architectures}  
In all our previous experiments we trainde a PreActResNet18~\citep{preactresnet} as it is the standard architecture used in the literature. However, our observations our also robust to the choice of architecture. As we can see in \cref{fig:beta-epsilon-wideresnet}, we can also induce CO when training a WideResNet28x10~\cite{wideresnet} on an injected version of CIFAR-10.

\begin{figure}[ht!]
    \centering
    \includegraphics[width=\textwidth]{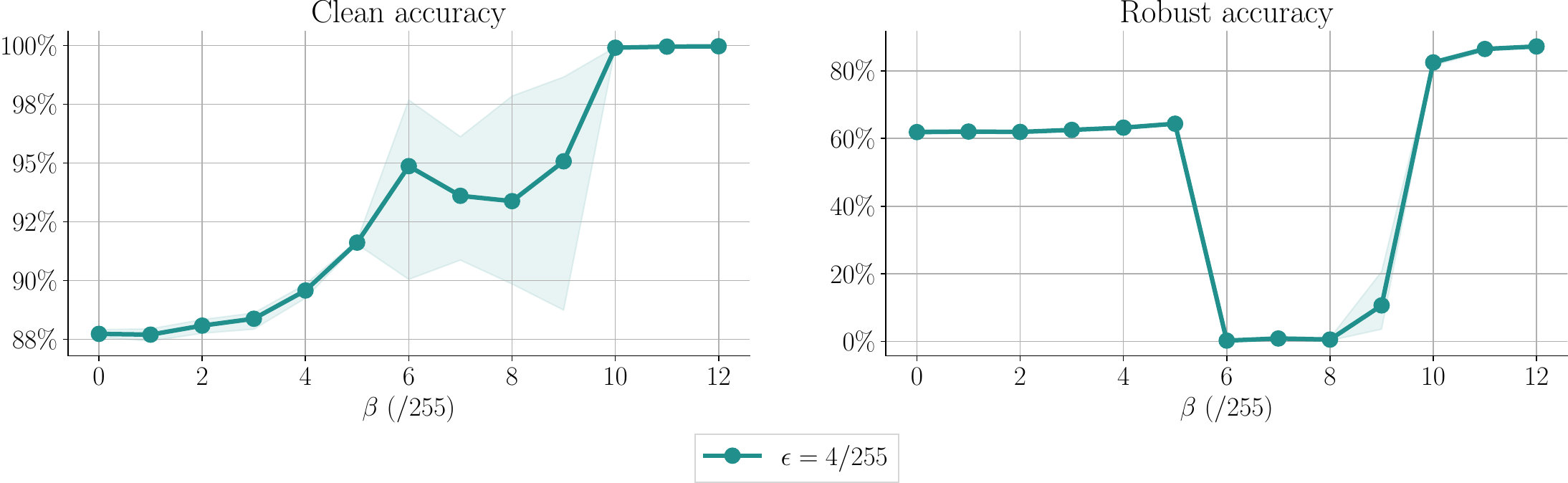}
    \caption{Clean and robust performance after FGSM-AT on injected datasets $\Db$ constructed from CIFAR-10 when training a WideResNet28x10. We perform this experiments at $\epsilon=\nicefrac{4}{255}$. Results are averaged over $3$ seeds and shaded areas report minimum and maximum values.}
    \label{fig:beta-epsilon-wideresnet}
\end{figure}

\section{Performance on $\mathcal{V}$}\label{ap:perf_v}

\begin{figure}
    \centering
    \includegraphics[width=0.5\textwidth]{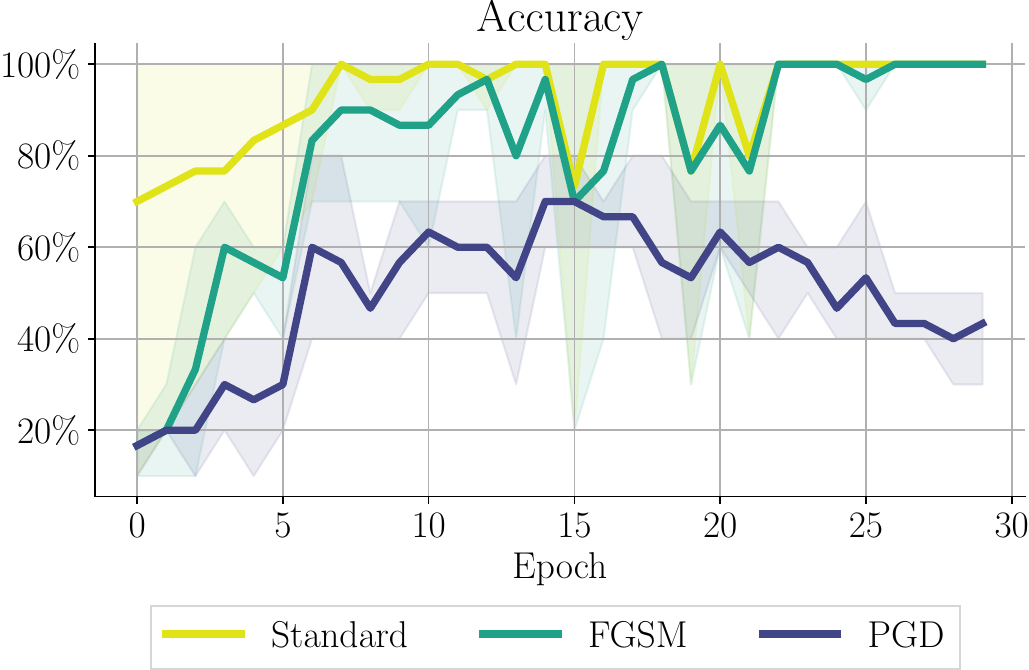}
    \caption{Accuracy on $\mathcal{V}$ of networks trained on $\Db$ using standard training, FGSM-AT or PGD-AT FGSM-AT. We use $\epsilon=\nicefrac{4}{255}$ and $\beta=\nicefrac{6}{255}$. Results averaged over 3 random seeds. Shades show min-max values.}
    \label{fig:v_perf}
\end{figure}

To further understand the behavior of the models described in \cref{sec:both_features}, we conduct an additional experiment evaluating the performance of the networks on the set of features $\mathcal{V}$. This experiment corroborates the findings shown in \cref{fig:per_dataset} and reveal more insights into how the networks utilize the features from $\mathcal{V}$ and $\mathcal{D}$.

In \cref{fig:v_perf}, we see that networks trained using standard training demonstrate near-perfect accuracy on $\mathcal{V}$ from the start, indicating that they heavily rely on these features for classification and can correctly classify the data based solely on these. In contrast, networks trained with PGD-AT show a gradual improvement in accuracy on $\mathcal{V}$, as they learn to effectively combine the information from $\mathcal{V}$ with that of $\mathcal{D}$ to achieve robustness. As anticipated, networks trained with FGSM-AT exhibit a similar behavior to those trained with PGD-AT initially. However, following the occurrence of catastrophic overfitting (CO), they experience a sudden increase in accuracy on $\mathcal{V}$.\looseness=-1

\section{Learned features at different $\beta$}\label{ap:data_slices}

In \cref{sec:inducing_CO} we discussed how, based on the strength of the injected features $\beta$, our injected datasets seem to have 3 distinct regimes: \begin{enumerate*}[label=(\roman*)]
    \item When $\beta$ is small we argued that the network would not use the injected features as these would not be very helpful. \item When $\beta$ would have a very large value then the network would only look at these features since they would be easy-to-learn and provide enough margin to classify robustly. \item Finally, there was a middle range of $\beta$ usually when $\beta \sim \epsilon$ where the injected features would be strongly discriminative but not enough to provide robustness on their own. This regime is where we observe CO.
\end{enumerate*} 

In this section we present an extension of \cref{fig:per_dataset} where we take FGSM trained models on the injected datasets ($\Db$) and evaluate them on three test sets: \begin{enumerate*}[label=(\roman*)] \item The injected test set ($\Db$) with the same features as the training set. \item  The original dataset ($\D$) where the images are unmodified. \item The shuffled dataset ($\Dpb$) where the injected features are permuted. That is, the set of injected features is the same but the class assignments are shuffled. Therefore, the injected features will provide conflicting information with the features present on the original image.\end{enumerate*}

In \cref{fig:per_dataset_extended} we show the performance on the aforementioned datasets for three different values of $\beta$. For $\beta=\nicefrac{2}{255}$ we are in regime \begin{enumerate*}[label=(\roman*)] \item: we observe that the tree datasets have the same performance, i.e. the information of the injected features does not seem to alter the result. Therefore, we can conclude the network is mainly using the features from the original dataset $\D$. When $\beta=\nicefrac{20}{255}$ we are in regime \item: the clean and robust performance of the network is almost perfect on the injected test set $\Db$ while it is close to $0\%$ (note this is worse than random classifier) for the shuffled dataset. So when the injected and original features present conflicting information the network aligns with the injected features. Moreover, the performance on the original dataset is also very low. Therefore, the network is mainly using the injected features. Lastly, $\beta=\nicefrac{8}{255}$ corresponds to regime \item: as discussed in \cref{sec:both_features}, in this regime the network initially learns to combine information from both the original and injected features. However, after CO, the network seems to focus only on the injected features and discards the information from the original features.\end{enumerate*}

\begin{figure}[ht!]
    \centering
    \includegraphics[width=\textwidth]{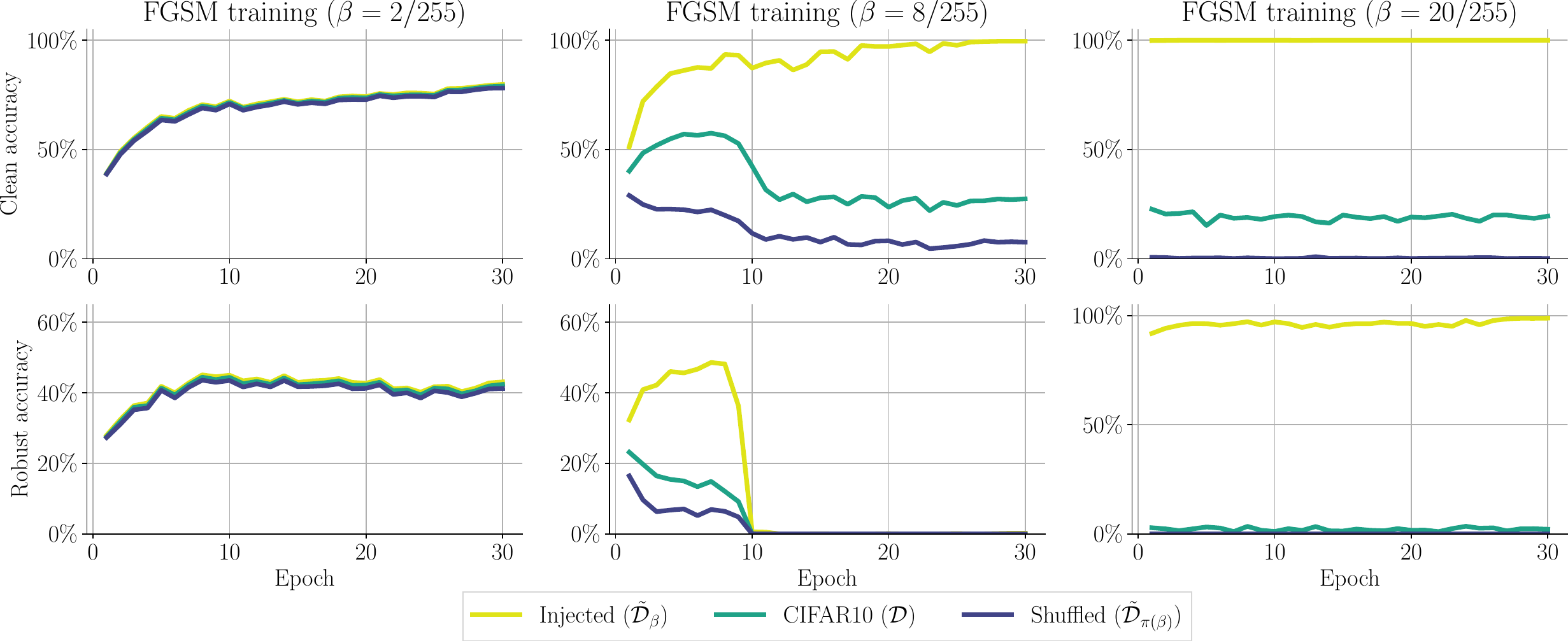}
    \caption{Clean (\textbf{top}) and robust (\textbf{bottom}) accuracy of FGSM-AT on $\Db$ at different $\beta$ values on 3 different test sets: (i) the original CIFAR-10 ($\mathcal{D}$), (ii) the dataset with injected features $\widetilde{\mathcal{D}}_\beta$ and (iii) the dataset with shuffled injected features $\Dpb$. All training runs use $\epsilon=\nicefrac{6}{255}$. \textbf{Left}: $\beta=\nicefrac{2}{255}$ \textbf{Center}: $\beta=\nicefrac{8}{255}$ \textbf{Right}: $\beta=\nicefrac{20}{255}$.}
    \label{fig:per_dataset_extended}
\end{figure}

\section{Analysis of curvature in different settings} \label{ap:curvature}
In \cref{fig:curvature-injection} (left) we track the curvature of the loss surface while training on different injected datasets with either PGD-AT or FGSM-AT. We observe that \begin{enumerate*}[label=(\roman*)] \item Curvature rapidly increases both for PGD-AT and FGSM-AT during the initial epochs of training. \item In those runs that presented CO, the curvature explodes around the $8^{th}$ epoch along with the training accuracy. \item When training with the dataset with orthogonally injected features ($\Dob$) the curvature does not increase. \end{enumerate*} This is aligned with our proposed mechanisms to induce CO whereby the network increases the curvature in order to combine different features to learn better representations. In this section we extend this analysis to different values of feature strength $\beta$ on the injected dataset ($\Db$). For details on how we estimate the curvature refer to \cref{ap:experimental_settings}.

\Cref{fig:curvature_betas} presents the curvature for different values of feature strength $\beta$ on the injected dataset ($\Db$). We show three different values of $\beta$ representative of the three regimes discussed in \cref{ap:data_slices}. Recall that when $\beta$ is small ($\beta=\nicefrac{2}{255}$) we observe that the model seems to focus only on CIFAR-10 features. Thus, we observe a curvature increase aligned with (CIFAR-10) feature combination. However, since for the chosen robustness radii $\epsilon=\nicefrac{6}{255}$ there is no CO, we observe that the curvature increase remains stable. When $\beta$ is quite large ($\beta=\nicefrac{20}{255}$) then the model largely ignores CIFAR-10 information and focuses on the easy-to-learn injected features. Since these features are already robust, there is no need to combine them and the curvature does not need to increase. In the middle range when CO happens ($\beta=\nicefrac{8}{255}$) we observe again the initial curvature increase and then curvature explosion.

\begin{figure}[ht!]
    \centering
    \includegraphics[width=\textwidth]{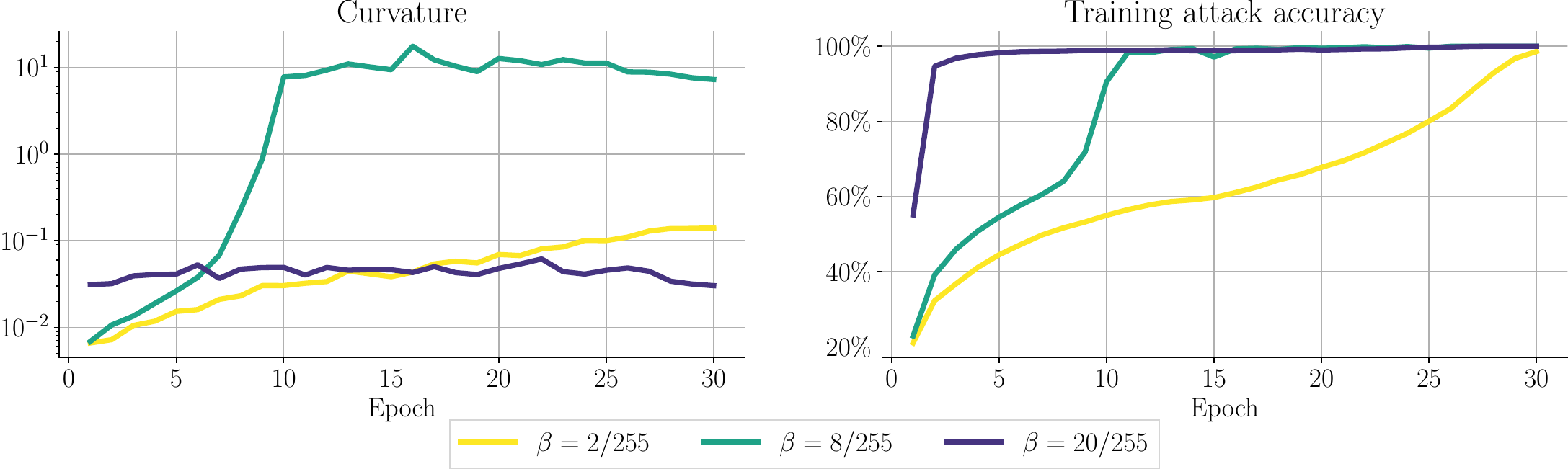}
    \caption{Evolution of curvature and training attack accuracy of FGSM-AT and PGD-AT trained on $\Db$ at different $\beta$ and for $\epsilon=\nicefrac{6}{255}$. Only when CO happens (for $\beta=\nicefrac{8}{255}$) the curvature explodes. For the other two interventions the curvature does not increase so much. We argue this is because the network does not need to disentangle $\D$ from $\V$, as it ignores either one of them.}
    \label{fig:curvature_betas}
\end{figure}

\section{Adversarial perturbations before and after CO}

\paragraph{Qualitative analysis} In order to further understand the change in behaviour after CO we presented visualizations of the FGSM perturbations before and after CO in \cref{fig:adv_perturbations}. We observed that while prior to CO, the injected feature components $\bm v(y)$ were clearly identifiable, after CO the perturbations do not seem to point in those directions although the network is strongly relying on them to classify. In \cref{fig:viz_deltas_pgdat} and \cref{fig:viz_deltas_fgsmat} we show further visualizations of the perturbations obtained both with FGSM or PGD attacks on networks trained with either PGD-AT or FGSM-AT respectively. 

We observe that when training with PGD-AT, i.e. the training does not suffer from CO, both PGD and FGSM attacks produce qualitatively similar results. In particular, all attacks seem to target the injected features with some noise due to the interaction with the features from CIFAR-10. For FGSM-AT, we observe that at the initial epochs (prior to CO) the pertubations are similar to those of PGD-AT, however, after CO perturbations change dramatically both for FGSM and PGD attacks. This aligns with the fact that the loss landscape of the network has dramatically changed, becoming strongly non-linear. This change yields single-step FGSM ineffective, however, the network remains vulnerable and multi-step attacks such as PGD are still able to find adversarial examples, which in this case do not point in the direction of discriminative features~\citet{jetley2018friends, ilyas2019adversarial}.

\begin{figure}[ht!]
     \centering
     \begin{subfigure}[b]{0.48\textwidth}
         \centering
            \includegraphics[width=\textwidth]{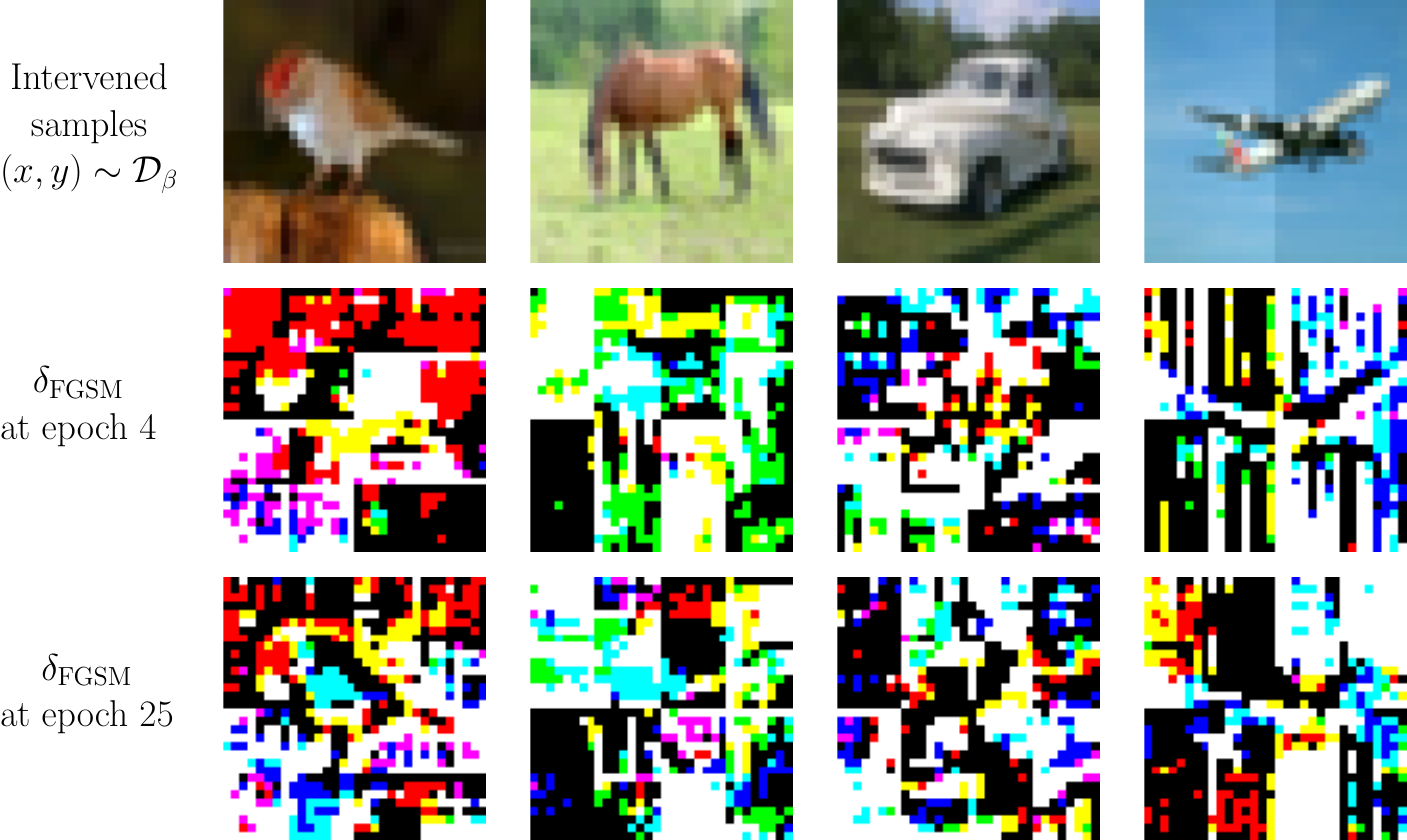}
         \caption{FGSM adversarial perturbations}
     \end{subfigure}
     \hfill
     \begin{subfigure}[b]{0.48\textwidth}
         \centering
         \includegraphics[width=\textwidth]{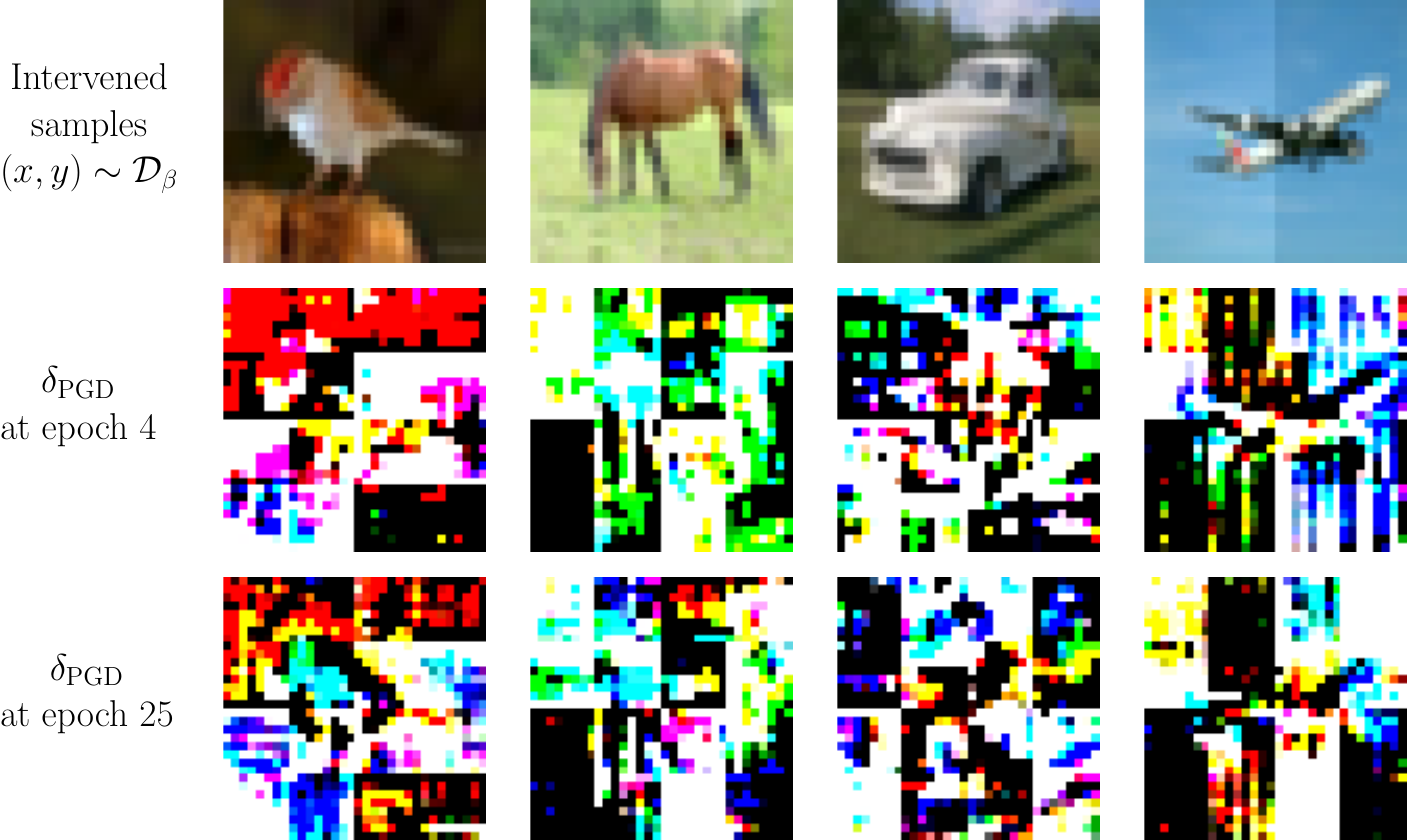}
         \caption{PGD adversarial perturbations}
     \end{subfigure}
    \caption{Different samples of the injected dataset $\Db$, and adversarial perturbations at epoch 4 and 22 of PGD-AT on $\Db$ at $\epsilon=\nicefrac{6}{255}$ and $\beta=\nicefrac{8}{255}$ (where FGSM-AT suffers CO). The adversarial perturbations remain qualitatively similar throughout training and align significantly with $\V$.}
    \label{fig:viz_deltas_pgdat}
\end{figure}

\begin{figure}[ht!]
     \centering
     \begin{subfigure}[b]{0.48\textwidth}
         \centering
         \includegraphics[width=\textwidth]{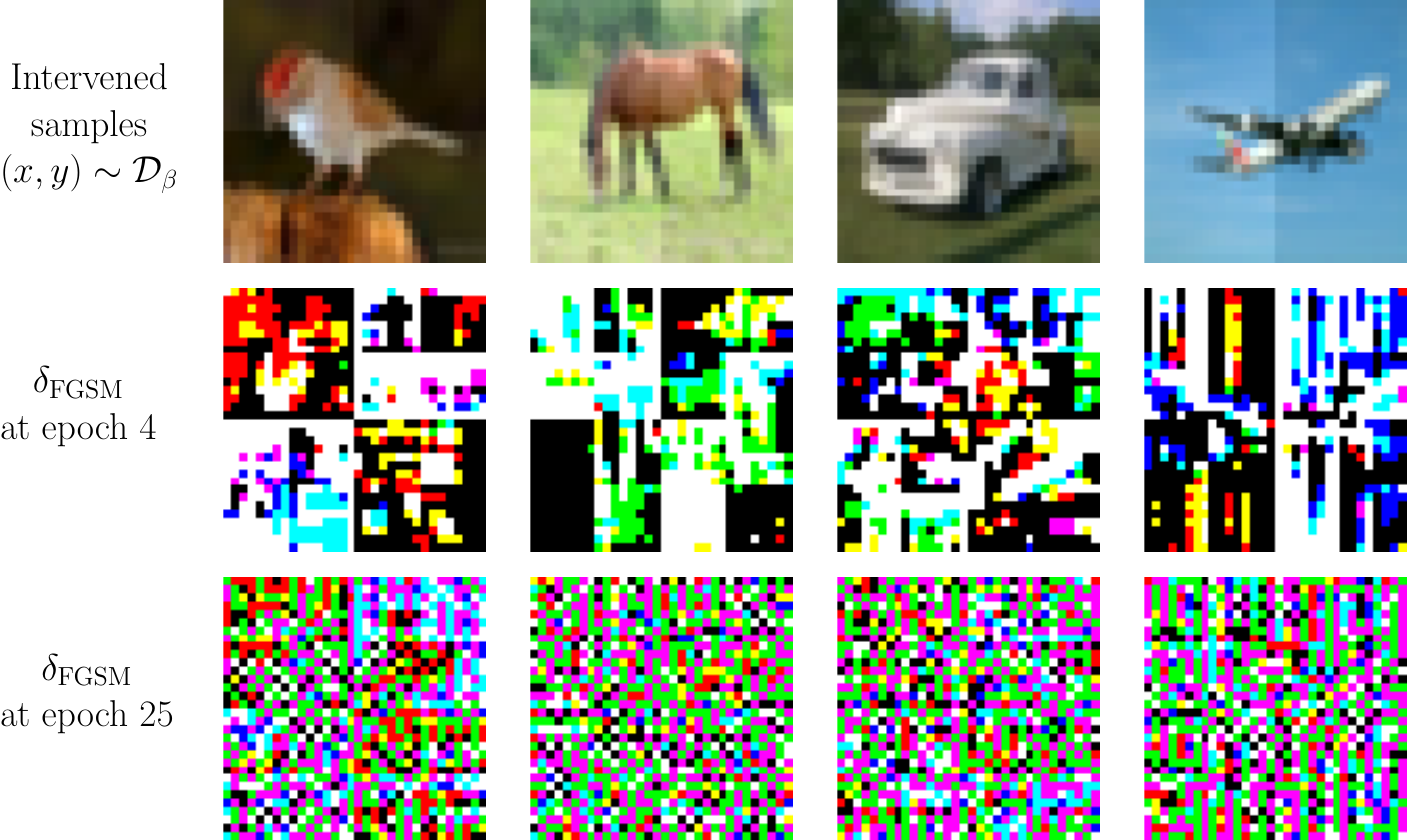}
         \caption{FGSM adversarial perturbations}
     \end{subfigure}
     \hfill
     \begin{subfigure}[b]{0.48\textwidth}
         \centering
         \includegraphics[width=\textwidth]{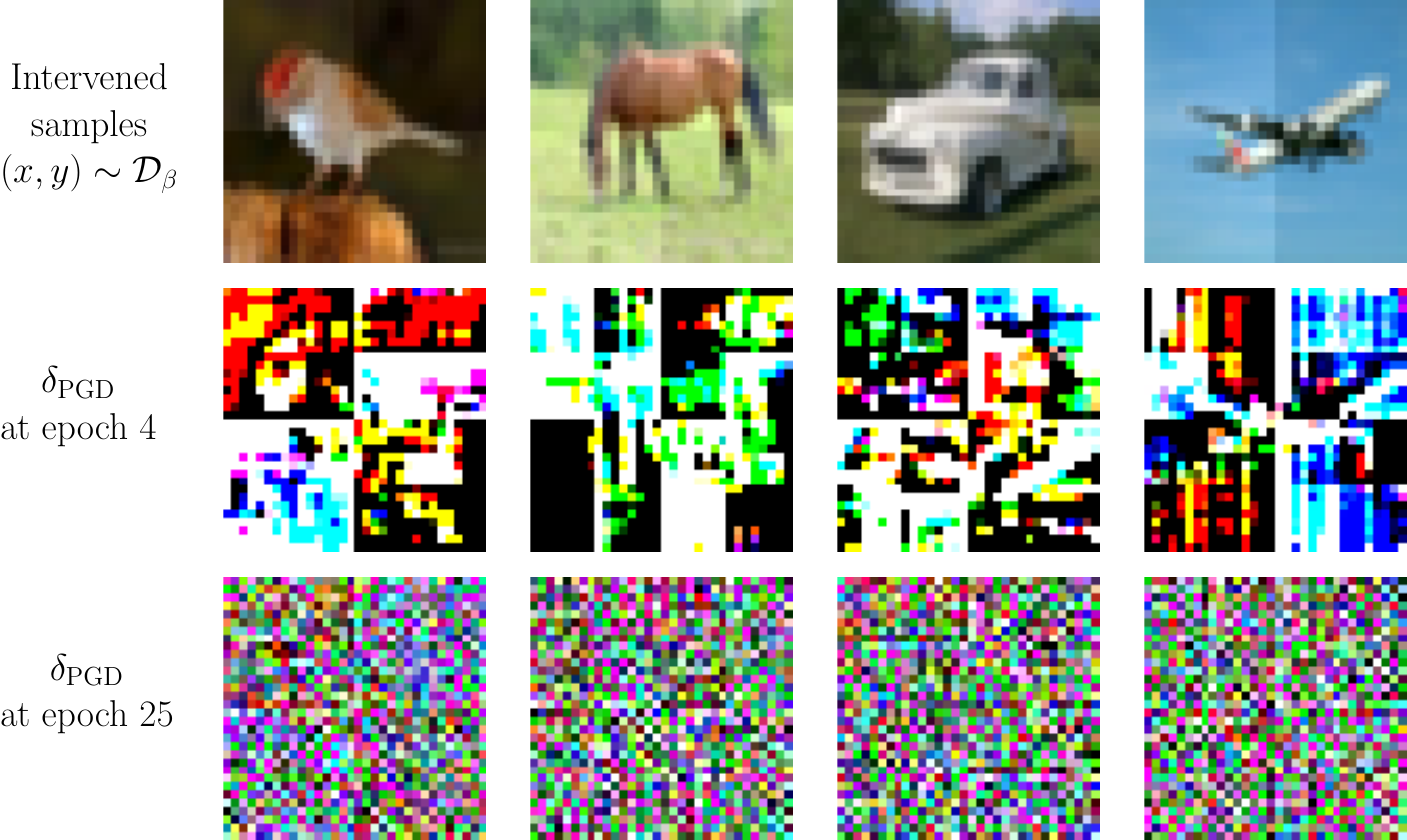}
         \caption{PGD adversarial perturbations}
     \end{subfigure}
    \caption{Different samples of the injected dataset $\Db$, and adversarial perturbations at epoch 4 (before CO) and 22 (after CO) of FGSM-AT on $\Db$ at $\epsilon=\nicefrac{6}{255}$ and $\beta=\nicefrac{8}{255}$ (where FGSM-AT suffers CO). The adversarial perturbations change completely before and after CO. Prior to CO, they align significantly with $\V$, but after CO they point to meaningless directions.}
    \label{fig:viz_deltas_fgsmat}
\end{figure}

\paragraph{Quantitative analysis} Finally, to quantify the radical change of direction of the adversarial perturbations after CO, we compute the evolution of the average alignment (\ie cosine angle) between the FGSM perturbations $\bm \delta$ and the injected features, such that if point $\bm x$ is associated with class $y$ we compute $\frac{\langle \bm \delta, \bm v(y)\rangle}{\|\bm v(y)\|_2\|\bm \delta\|_2}$. \Cref{fig:alignment} (left) shows the results of this evaluation, where we can see that before CO there is a non-trivial alignment between the FGSM perturbations and their corresponding injected features, that after CO quickly converges to the same alignment as the one between two random vectors.\looseness=-1

To complement this view, we also perform an analysis of the frequency spectrum of the FGSM perturbations. In \cref{fig:alignment} (right), we plot the average magnitude of the DCT transform of the FGSM perturbations computed on the test set of an intervened version of CIFAR-10 during training. As we can see, prior to CO, most of the energy of the perturbations is concentrated around the low frequencies (remember that the injected features are low frequency), but after CO happens, around epoch 8, the energy of the perturbations quickly gets concentrated towards higher frequencies. These two plots corroborate, quantitatively, our previous observations, that before CO, FGSM perturbations are pointing towards meaningful predictive features, while after CO,  although we know the network still uses the injected features (see \cref{fig:per_dataset}) the FGSM perturbations suddenly point in a different direction. \looseness=-1

\section{Further results with N-FGSM, GradAlign and PGD}  \label{ap:solutions}

In \cref{sec:discussion} we studied different SOTA methods that have been shown to prevent CO. Interestingly, we observed that in order to avoid CO on the injected dataset a stronger level of regularization is needed. Thus, indicating that the intervention is strongly favouring the mechanisms that lead to CO. For completeness, in \cref{fig:beta-epsilon-solutions-e6} we also present results of the clean accuracy (again with the robust accuracy). As expected, for those runs in which we observe CO, clean accuracy quickly saturates. Note that for stronger levels of regularization the clean accuracy is lower. An ablation of the regularization strength might help improve results further, however the purpose of this analysis is not to improve the performance on the injected dataset but rather to show it is indeed possible to prevent CO with the same methods that work for unmodified datasets.

\begin{figure}[ht!]
    \centering
    \includegraphics[width=\textwidth]{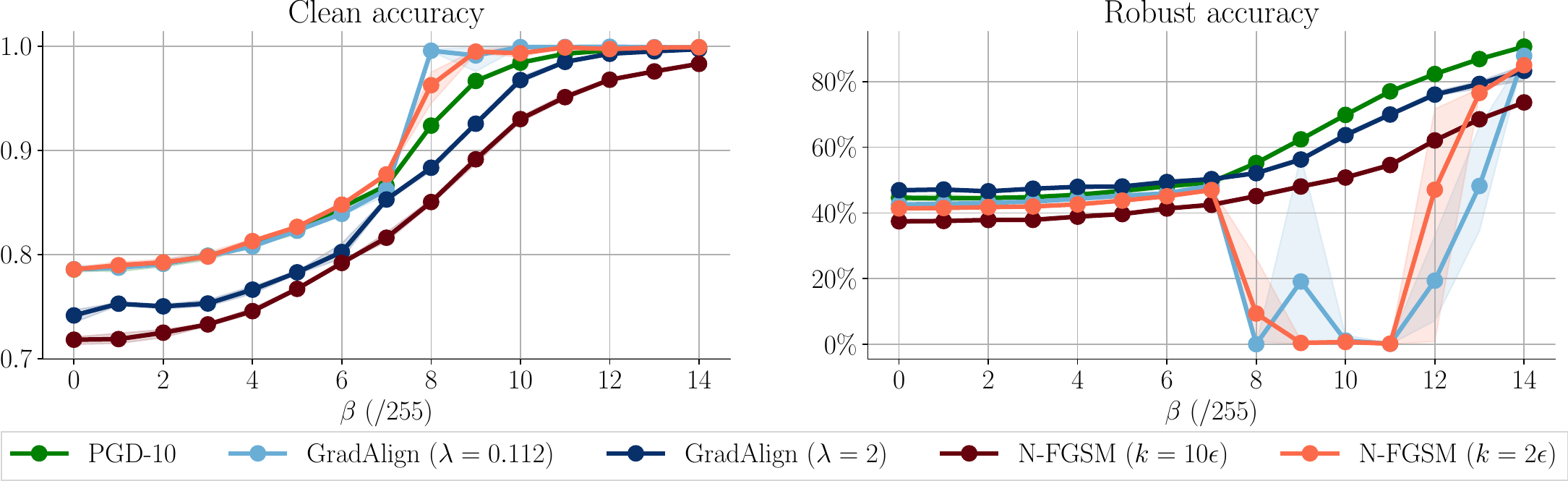}
    \caption{Clean (\textbf{left}) and robust (\textbf{right}) accuracy after AT with PGD-10, GradAlign and N-FGSM on $\Db$ at $\epsilon=\nicefrac{6}{255}$. Results averaged over three random seeds and shaded areas report minimum and maximum values.}
    \label{fig:beta-epsilon-solutions-e6}
\end{figure}

\section{Further details of low-pass experiment}  \label{ap:low_pass}

We expand here over the results in \Cref{sec:discussion} and provide further details on the experimental settings of \cref{tab:low-pass-cifar}. Specifically, we replicate the same experiment, \ie training a low-pass version of CIFAR-10 using FGSM-AT at different filtering bandwidths. As indicated in \cref{sec:discussion} we use the low-pass filter introduced in \citet{ortiz2020hold} which only retains the frequency components in the upper left quadrant of the DCT transform of an image. That is, a low-pass filter of bandwidth $W$ would retain the $W\times W$ upper quadrant of DCT coefficients of all images, setting the rest of the coefficients to $0$.

\Cref{fig:low_pass} shows the robust accuracy obtained by FGSM-AT on CIFAR-10 versions that have been pre-filtered using such a low-pass filter. Interestingly, while training on the unfiltered images does induce CO on FGSM-AT, just removing a few high-frequency components is enough to prevent CO $\epsilon={8}{255}$. However, as described before, it seems that at $\epsilon=\nicefrac{16}{255}$ no frequency transformation can avoid CO. Clearly, this transformation cannot be used as technique to prevent CO, but it highlights once again that the structure of the data plays a significant role in inducing CO.

\begin{figure}[ht!]
\centering
\includegraphics[width=0.5\textwidth]{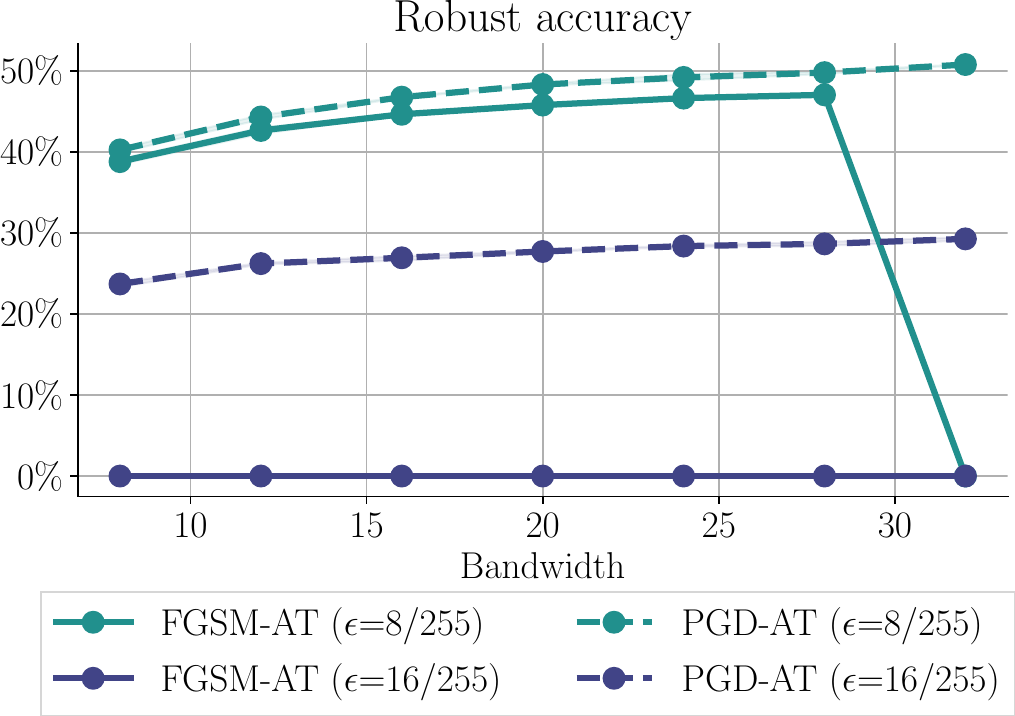}
\caption{Robust accuracy of FGSM-AT and PGD-AT on different low-passed versions of CIFAR-10 using the DCT-based low pass filter introduced in~\citet{ortiz2020hold}. $\text{Bandwidth}=32$ corresponds to the original CIFAR-10, while smaller bandwidths remove more and more high-frequency components. At $\epsilon=8/255$ just removing a few high-frequency components is enough to prevent CO, while at $\epsilon=16/255$ no frequency transformation avoids CO.}
\label{fig:low_pass}
\end{figure}

\paragraph{Relation with anti-aliasing pooling layers} \label{sec:FreqLowCutPooling} As mentioned in \Cref{sec:discussion}, our proposed low-passing technique is very similar in spirit to  works which propose using anti-aliasing low-pass filters at all pooling layers \citep{flc_pooling, shift_invariant}. Indeed, as shown by \citet{ortiz2020hold}, CIFAR10 contains a significant amount of non-robust features on the high-frequency end of the spectrum due to aliasing produced in their down-sampling process. In this regard, it is no surprise that methods like the one proposed in \citet{flc_pooling} can prevent CO at $\epsilon=\nicefrac{8}{255}$ using the same training protocol as in our work (robust accuracy is $45.9\%$). Interestingly, though, repeating the experiments in \citet{flc_pooling} work using $\epsilon=\nicefrac{16}{255}$ does lead to CO (robust accuracy is $0.0\%$) in \Cref{sec:discussion}. This result was not reported in the original paper, but we see it as a corroboration of our observations. Indeed, features play a role in CO, but the problematic features do not always come from excessively high-frequencies or aliasing. However, we still consider that preventing aliasing in the downsampling layers is a promising avenue for future work in adversarial robustness.

\end{document}